\newcommand{\tens}[1]{%
  \mathbin{\mathop{\otimes}\limits_{#1}}%
}
\newcommand{\x}{\ensuremath{x}}
\newcommand{\z}{\ensuremath{z}}
\newcommand{\q}{\theta}
\newcommand{\f}{\phi}
\newcommand{\E}{\ensuremath{\mathbb{E}}}
\newcommand{\hide}[1]{}
\DeclareRobustCommand{\cev}[1]{%
  \mathpalette\do@cev{#1}%
}
\newcommand{\do@cev}[2]{%
  \fix@cev{#1}{+}%
  \reflectbox{$\m@th#1\vec{\reflectbox{$\fix@cev{#1}{-}\m@th#1#2\fix@cev{#1}{+}$}}$}%
  \fix@cev{#1}{-}%
}
\newcommand{\fix@cev}[2]{%
  \ifx#1\displaystyle
    \mkern#23mu
  \else
    \ifx#1\textstyle
      \mkern#23mu
    \else
      \ifx#1\scriptstyle
        \mkern#22mu
      \else
        \mkern#22mu
      \fi
    \fi
  \fi
}
\definecolor[named]{Blue}{cmyk}{1,0.1,0,0.1}
\definecolor[named]{Yellow}{cmyk}{0,0.16,1,0}
\definecolor[named]{Orange}{cmyk}{0,0.42,1,0.01}
\definecolor[named]{Red}{cmyk}{0,0.90,0.86,0}
\definecolor[named]{LightBlue}{cmyk}{0.49,0.01,0,0}
\definecolor[named]{Green}{cmyk}{0.20,0,1,0.19}
\definecolor[named]{Purple}{cmyk}{0.55,1,0,0.15}
\definecolor[named]{DarkBlue}{cmyk}{1,0.58,0,0.21}
\newacronym{SCFM}{scfm}{stochastic control-flow model}
\newacronym{WS}{ws}{wake-sleep}
\newacronym{BWS}{bws}{basic wake-sleep}
\newacronym{RWS}{rws}{reweighted wake-sleep}
\newacronym{ELBO}{elbo}{evidence lower bound}
\newacronym{VAE}{vae}{variational autoencoder}
\newacronym{IWAE}{iwae}{importance weighted autoencoder}
\newacronym{KL}{kl}{Kullback-Leibler}
\newacronym{SGD}{sgd}{stochastic gradient descent}
\newacronym{VIMCO}{vimco}{variational inference for Monte Carlo objectives}
\newacronym{WW}{ww}{wake-wake}
\newacronym{WWS}{wws}{wake-wake-sleep}
\newacronym{AIR}{air}{Attend, Infer, Repeat}
\newacronym{ESS}{ess}{effective sample size}
\newacronym{REINFORCE}{reinforce}{Reinforce gradient estimator}
\newacronym{IS}{is}{importance sampling}
\newacronym{GMM}{gmm}{Gaussian mixture model}
\newacronym{MNIST}{mnist}{hand-written digit dataset}
\newacronym{RELAX}{relax}{RELAX gradient estimator}
\newacronym{REBAR}{rebar}{REBAR gradient estimator}
\newacronym{PMF}{pmf}{probability mass function}
\newacronym{MLP}{mlp}{multilayer perceptron}
\newacronym{RNN}{rnn}{recurrent neural network}
\newacronym{PCFG}{pcfg}{probabilistic context free grammar}
\newacronym{ADAM}{adam}{ADAM}
\newtheorem{proposition}{Proposition}
\theoremstyle{definition}
\newtheorem{definition}{Definition}
\newcommand{\given}{\lvert}
\newcommand{\pw}{\overset{\text{p.w.}}{\sim}
}
\icmltitlerunning{Amortized Population Gibbs Samplers with Neural Sufficient Statistics}
\begin{document}

\twocolumn[
\icmltitle{Amortized Population Gibbs Samplers with Neural Sufficient Statistics}



\icmlsetsymbol{equal}{*}

\begin{icmlauthorlist}
\icmlauthor{Hao Wu}{neu}
\icmlauthor{Heiko Zimmermann}{neu}
\icmlauthor{Eli Sennesh}{neu}
\icmlauthor{Tuan Anh Le}{mit}
\icmlauthor{Jan-Willem van de Meent}{neu}
\end{icmlauthorlist}

\icmlaffiliation{neu}{Khoury College of Computer Sciences, Northeastern University, Boston, MA, USA}
\icmlaffiliation{mit}{Department of Brain and Cognitive Sciences, Massachusetts Institute of Technology, Cambridge, MA, USA}

\icmlcorrespondingauthor{Hao Wu}{haowu@ccs.neu.edu}

\icmlkeywords{Machine Learning, ICML}

\vskip 0.3in
]



\printAffiliationsAndNotice{}  

\begin{abstract}

We develop amortized population Gibbs (APG) samplers, a class of scalable methods that frames structured variational inference as adaptive importance sampling. APG samplers construct high-dimensional proposals by iterating over updates to lower-dimensional blocks of variables. We train each conditional proposal by minimizing the inclusive KL divergence with respect to the conditional posterior. To appropriately account for the size of the input data, we develop a new parameterization in terms of neural sufficient statistics. Experiments show that APG samplers can train highly structured deep generative models in an unsupervised manner, and achieve substantial improvements in inference accuracy relative to standard autoencoding variational methods.

\end{abstract}

\section{Introduction}
\label{introduction}
Many inference tasks involve hierarchical structure. For example, when modeling a corpus of videos that contain moving objects, we may wish to reason about three levels of representation. At the corpus level, we can reason about the distribution over objects and dynamics of motion. At the instance level, i.e.~for each video, we can reason the visual features and trajectories of individual objects. Finally, at a data point level, i.e.~for each video frame, we can reason about the visibility of objects and their position.



In the absence of supervision, uncovering hierarchical structure from data requires inductive biases. Deep generative models let us incorporate biases in the form of priors that mirror the structure of the problem domain. 
By parameterizing conditional distributions with neural networks, we hope to learn models that use corpus-level characteristics (e.g.~the appearance of objects and the motion dynamics) to make data-efficient inferences about instance-level variables (e.g.~object appearance and positions).

In recent years we have seen applications of amortized variational methods to inference in hierarchical domains, such as object detection \cite{eslami2016attend}, modeling of user reviews \cite{esmaeili2019structured}, and object tracking \cite{kosiorek2018sequential}. These approaches build on the framework of variational autoencoders (VAEs) \cite{kingma2013auto-encoding, rezende2014stochastic} to train structured deep generative models. However, scaling up these approaches to more complex domains still poses significant challenges. Whereas it is easy to train VAEs on large corpora of data, it is not easy to train structured encoders for models with a large number of (correlated) variables at the instance level. 

In this paper, we develop methods for amortized inference that scale to structured models with hundreds of latent variables. Our approach takes inspiration from work by \citet{johnson2016composing}, which develops methods for efficient inference in deep generative models with conjugate-exponential family priors. In this class of models, we can perform inference using variational Bayesian expectation maximization (VBEM)  \cite{beal2003variational,bishop2006pattern,wainwright2008graphical}, which iterates between closed-form updates to blocks of variables. VBEM is computationally efficient, often converges in a small number of iterations, and scales to a large number of variables. However, VBEM is also model-specific, difficult to implement, and only applicable to a restricted class of conjugate-exponential models.

To overcome these limitations, we develop a more general approach that frames variational inference as importance sampling with amortized proposals. We propose \emph{amortized population Gibbs} samplers, a class of methods that iterate between conditional proposals to blocks of variables. To train these proposals, we minimize the inclusive KL divergence w.r.t.~the conditional posterior. We use proposals in a sequential Monte Carlo (SMC) sampler to reduce the variance of importance weights, which improves efficiency during training and inference at test time.

Our experiments establish that APG samplers can capture the hierarchical structure in deep generative models in an unsupervised manner.
In Gaussian mixture models, where Gibbs updates can be computed in closed form, the learned proposals converge to the conditional posteriors.
To evaluate the performance on non-conjugate models, we design a deep generative mixture model and an unsupervised tracking model. We compare APG samplers to reweighted wake-sleep methods, a bootstrapped population sampler, and a HMC-augmented baseline.
In all experiments APG samplers can capture the hierarchical characteristics of the problem and outperformed existing methods in terms of training efficiency and inference accuracy.

We summarize the contributions of this paper as follows: 
\begin{enumerate}[labelwidth=0.5em,labelsep=0.5em,leftmargin=1.0em,topsep=0em,itemsep=\parsep]
    \vspace{-0.3em}
    \item We develop APG samplers, a new class of amortized inference methods that employ approximate Gibbs conditionals to iteratively improve samples.
    \vspace{-0.3em}
    \item We present a novel parameterization using neural sufficient statistics to learn proposals that generalize across input instances that vary in size.
    \vspace{-0.3em}
    \item We show that APG samplers can be used to train structured deep generative models with 100s of instance-level variables in an unsupervised manner.
    \vspace{-0.3em}
    \item We demonstrate substantial improvements in terms of test-time inference efficiency relative to baseline methods.
\end{enumerate}



\section{Background}
\label{sec:background}

In probabilistic machine learning, we commonly define a generative model in terms of a joint distribution $p_\q(\x, \z)$ over data $\x$ and latent variables $\z$. Given a model specification, we are interested in reasoning about the posterior distribution $p_\q(\z \mid \x)$ conditioned on data $\x$ sampled from an (unknown) data distribution $p^\textsc{data}(\x)$, which we in practice approximate using an empirical distribution over training data $\hat{p}(x)$. Amortized variational inference methods approximate the posterior $p_\q(\z \mid \x)$ with a distribution $q_\f(\z \mid \x)$ in some tractable variational family.

A widely used class of deep probabilistic models are variational autoencoders \cite{kingma2013auto-encoding, rezende2014stochastic}.
These models are trained by maximizing the stochastic lower bound (ELBO) in Equation~\ref{eq:elbo} on the log marginal likelihood, which is equivalent to minimizing the exclusive KL divergence $\mathrm{KL}(q_\f(z | x) || p_\q(z | x))$. 
\begin{align}
    \label{eq:elbo}
    \mathcal{L} (\f)
    &
    = 
    \E_{\hat{p}(x) \: q_\f(\z \mid \x)}
    \left[
       \log \frac{p_\q(\x, \z)}{q_\f(\z \mid \x)}
    \right] 
\end{align}
When the distribution specified by the encoder is reparameterizable, we can compute Monte Carlo estimates of the gradient of this objective using pathwise derivatives. 
Non-reparameterizable cases, such as models with discrete variables, require likelihood-ratio estimators  \cite{williams1992simple} 
which can have a high variance. A range of approaches have been proposed to reduce this variance, including reparameterized continuous relaxations \cite{maddison2017concrete,jang2017categorical}, credit assignment techniques \cite{weber2019credit}, and other control variates \cite{mnih2016variational,tucker2017rebar,grathwohl2018backpropagation}. 

Reweighted wake-sleep (RWS) methods \cite{bornschein2014reweighted} sidestep the need for reparameterization by minimizing the stochastic upper bound in Equation~\ref{eq:eubo}, which is equivalent to minimizing an inclusive KL divergence:
\begin{align}
    \label{eq:eubo}
    \mathcal{U} (\f)
    &
    = 
    \E_{\hat{p}(x) \, p_\q(\z \mid \x)}
    \left[
       \log \frac{p_\q(\x, \z)}{q_\f(\z \mid \x)}
    \right]
    .
\end{align}
Here we can compute a self-normalized gradient estimator using samples $z \sim q_\f(z \mid x)$ and weights $w = \frac{p_\q(x, z)}{q_\f(z \mid x)}$
\begin{align}
\label{eq:grad-phi-rws}
    - \nabla_\f
    \,
    \mathcal{U} (\f)
    \simeq
    \sum_{l=1}^L
    \frac{w^l}{\sum_{l'} w^{l'}}
    \nabla_\f
    \log q_\f(z^l \,|\, x)
    .
\end{align}
This gradient estimator has a number of advantages over the estimator used in standard VAEs \cite{le2019revisiting}. Notably, it only requires that the \emph{proposal density} is differentiable, whereas reparameterized estimators require that the \emph{sample} itself is differentiable. This is a milder condition that holds for most distributions of interest, including those over discrete variables. Moreover, minimizing the inclusive KL reduces our risk of learning a proposal that collapses to a single mode of a multi-modal posterior \cite{le2019revisiting}. 

\section{Amortized Population Gibbs Samplers}
\label{sec:amortized-gibbs}
To generate high-quality samples from the posterior in an incremental manner, we develop methods that are inspired by EM and Gibbs sampling, both of which perform iterative updates to blocks of variables. Concretely, we assume that the latent variables in the generative model decompose into blocks $\z = \{\z_1, \ldots, \z_B\}$ and train proposals $q_\f(z_b \mid x, z_{-b})$ that update the variables in a each block $z_{b}$ conditioned on variables in the remaining blocks $\z_{-b} = z \setminus \{z_b\}$.

Starting from a sample $q_\f(\z^{1} \mid \x)$ from a standard encoder, we  generate a sequence of samples $\{z^1, \ldots, z^K\}$ by performing a sweep of conditional updates to each block $z_b$
\begin{align}
    \label{eq:approx-gibbs-kernel}
    q_\f \big( \z^k \mid \x, \z^{k-1} \big)
    &=
    \prod_{b=1}^B
    q_\f(\z^k_b \mid \x, \z^{k}_{\prec b}, \z^{k-1}_{\succ b})
    ,
\end{align}
where $\z_{\prec b} = \{z_i \mid i < b\}$ and $\z_{\succ b} = \{z_i \mid i > b\}$. Repeatedly applying sweep updates then yields a proposal
\begin{equation*}
    q_\f(z^1, \ldots, z^K \mid x) 
    =
    q_\f(z^1 \mid x)
    \prod_{k=2}^K
    q_\f(z^k \mid \x, z^{k-1}).
\end{equation*}
We want to train proposals that improve the quality of each sample $z^k$ relative to that of the preceding sample $z^{k-1}$. There are two possible strategies for accomplishing this. One is to define an objective that minimizes the discrepancy between the marginal $q_\f(z^K \mid x)$ for the final sample and the posterior $p_\q(z^K \mid x)$. This corresponds to learning a sweep update $q_\f(z^k \mid x, z^{k-1})$ that transforms the initial proposal to the posterior in exactly $K$ sweeps. An example of this type of approach, albeit one that does not employ block updates, is the recent work on annealing variational objectives \cite{huang2018improving}.

In this paper, we will pursue a different approach. Instead of transforming the initial proposal in exactly $K$ steps, we learn a sweep update that leaves the target density \emph{invariant}
\begin{equation*}
    p_\q(\z^k \,|\, \x) = \int \: p_\q(\z^{k-1} \,|\, \x) \: q_\f(\z^k \,|\,\x, \z^{k-1}) d z^{k-1}.
\end{equation*}
When this condition is met, the proposal $q_\f(z^1, \ldots z^K \,|\, x)$ is a Markov Chain whose stationary distribution is the posterior. 
This means a sweep update learned at training time can be applied at test time to iteratively improve sample quality, without requiring a pre-specified number of updates $K$.

When we require that each block update $q_\f(z'_b \mid x, z_{-b})$ leaves the target density invariant, 
\begin{align}
    \label{eq:gibbs-invariance}
    p_\q(z'_b, z_{-b} \mid x)
    &= 
    \int 
    p_\q(z_b, z_{-b} \mid x) \:
    q_\f(z'_b, \mid x, z_{-b}) 
    dz_{b}
    \nonumber
    \\
    &= 
    p_\q(z_{-b} \mid x) \:
    q_\f(z'_b \mid x, z_{-b}),
\end{align}
then each update must equal the exact conditional posterior 
$q_\f(z'_b \mid x, z_{-b}) = p_\q(z'_b \mid x, z_{-b})$. In other words, when the condition in Equation~\ref{eq:gibbs-invariance} is met, the proposal $q_\f(z^1, \ldots z^K \,|\, x)$ is a Gibbs sampler.

\subsection{Variational Objective} 
To learn each of the block proposals $q_\f(z_b \mid x, z_{-b})$ we will minimize the inclusive KL divergence $\mathcal{K}_b(\f)$
\begin{align*}
    \E_{\hat{p}(x)p_\q(z_{-b} | x)}
    \biggl[
        \text{KL}\left(
            p_\q(z_{b} \mid x, z_{-b})
            \,||\,
            q_\f(z_{b} \mid x, z_{-b})
        \right)
    \biggr]
\end{align*}
Unfortunately, this objective is intractable, since we are not able to evaluate the density of the true marginal $p_\q(z_{-b} \mid x)$, nor that of the conditional $p_\q(z_{b} \mid z_{-b}, x)$. 
As we will discuss in Section~\ref{sec:experiments}, this has implications for the evaluation of the learned proposals, since we cannot compute a lower or upper bound on the log marginal likelihood as in other variational methods. However, it is nonetheless possible to approximate the gradient of the objective 
\begin{align*}
    -\nabla_\f \mathcal{K}_b(\f)
    &=
    \E_{
    \hat{p}(x) \:
    p_\q(z_{b}, z_{-b} | x)
    }
    \left[
    \nabla_\f
    \log q_\f(z_{b} \,|\, x, z_{-b})
    \right].
\end{align*}
We can estimate this gradient using any Monte Carlo method that generates samples from the posterior $z \sim p_\q(z \,\mid\, x)$. In the next section, we will use the learned proposals to define an importance sampler, which we will use to obtain a set of weighted samples $\{(w^l, \, z^l)\}_{l=1}^L$ to compute a self-normalized gradient estimate
\begin{align}
    \label{eq:grad-self-normalized}
    -\nabla_\f \mathcal{K}_b(\f)
    \simeq
    \sum_{l=1}^L
    \frac{w^l}{\sum_{l'} w^{l'}}
    \nabla_\f
    \log q_\f(z^l_{b} \,|\, x, z^l_{-b}).
\end{align}
When we also want to learn a generative model $p_\q(\x, \z)$, we can compute a similar self-normalized gradient estimate 
\begin{align}
    \nabla_\q \log p_\q(\x) 
    &=
    \mathbb{E}_{p_\q(\z | \x)} 
    \left[
    \nabla_\q \log p_\q(\x, \z)
    \right],
    \label{eq:grad-theta}
    \\
    &
    \simeq
    \sum_{l=1}^L
    \frac{w^l}{\sum_{l'} w^{l'}}
    \nabla_\q
    \log p_\q(x, z^l).
    \nonumber
\end{align}
to maximize the marginal likelihood.
The identity in Equation~\ref{eq:grad-theta} holds due to the identity \mbox{$\E_{p_\q(z|x)}[\nabla_\q \log p_\q(z|x)]=0$} (see Appendix \ref{appendix:grad-theta} for details). 

In the following, we will describe an adaptive importance sampling scheme which generates high quality samples to accurately estimate gradients for the conditional proposal $q_\f(z_b | x, z_{-b})$ and an initial one-shot encoder $q_\f(z | x)$.

\subsection{Generating High Quality Samples}


A well-known limitation of self-normalized importance samplers is that weights can have a high variance when latent variables are high-dimensional and/or correlated, which is common in models with hierarchical structure. Moreover, this variance gives rise to bias in self-normalized estimators. This bias arises from the approximation of the normalizing factor $1/p_\theta(x) \simeq 1/\hat{Z}$ using the average weight $\hat{Z} = \frac{1}{L} \sum_{l} w^l$. The average weight is an unbiased estimate of the marginal likelihood $\mathbb{E}[\hat{Z}] = p_\theta(x)$. However $1/\hat{Z}$ is not an unbiased estimate of the normalizing factor, since by Jensen's inequality $\mathbb{E}[1/\hat{Z}] \ge 1/p_\theta(x)$, resulting in a bias that is directly tied to the variance of the weights.

In practice, limitations on GPU memory and computation time imply that the estimators in Equations~\ref{eq:grad-self-normalized} and~\ref{eq:grad-theta} need to employ a small budget $L$. This means that the bias will likely be high, particularly during the early stages of training when the proposal poorly approximates the posterior. However, even a biased estimator can serve to improve the quality of a proposal in practice. A high-bias estimator will have a low effective sample size, which is to say that it will assign a normalized weight close to $1$ to the best proposal in the set and weight $0$ to all other samples. The gradient in Equation~\ref{eq:grad-self-normalized} will therefore initially serve to increase the probability of the highest-weight sample. As long as doing so improves the quality of the proposal, RWS-style optimization will gradually reduce the bias as training proceeds.

To help mitigate the bias of self-normalized estimators, we use a sequential Monte Carlo sampler \citep{delmoral2006sequential} to reduce weight variance. SMC methods \cite{doucet2001sequential} decompose a high-dimensional sampling problem into a sequence of lower-dimensional problems. They do so by combining two ideas: (1) sequential importance sampling, which defines a proposal for a sequence of variables using a sequence of conditional proposals, (2) resampling, which selects intermediate proposals with probability proportional to their weights to improve sample quality. 

SMC methods are most commonly used in state space models to generate proposals for a sequence of variables by proposing one variable at a time. We here consider SMC \emph{samplers} (see  Algorithm~\ref{alg:smcs}), a subclass of SMC methods that interleave resampling with the application of a transition kernel. The distinction between SMC methods for state space models and SMC samplers is subtle but important. Whereas the former generate proposals for a sequence of variables $z_{1:t}$ by proposing $z_t \sim q(z_t \mid z_{1:t-1})$ to \emph{extend} the sample space at each iteration, SMC samplers can be understood as an importance sampling analogue to Markov chain Monte Carlo (MCMC) methods \cite{brooks2011handbook}, which construct a Markov chain $z^{1:K}$ by generating a proposal $z^k$ from a transition kernel $q(z^k \mid z^{k-1})$ at each iteration.



\begin{algorithm}[!t]
\setstretch{1.2}
  \caption{Sequential Monte Carlo sampler}
  \label{alg:smcs}
\begin{algorithmic}[1]
    \small
    \State \textbf{Input} Model $\gamma^{1:K}$, Proposals $q^{1:K}$, Number of particles $L$
    \For{$l = 1$ \textbf{to} $L$}
        \State $z^{1,l} \sim q^1(\cdot)$\Comment{Propose}
        \State $w^{1,l} = \frac{\gamma^1(z^{1,l})}{q^1(z^{1,l})}$\Comment{Weigh}
    \EndFor
    \For{$k = 2$ \textbf{to} $K$}
      \State$z^{k-1,1:L}, w^{k-1,1:L}=\textsc{resample}(z^{k-1,1:L}, w^{k-1,1:L})$

      \For{$l = 1$ \textbf{to} $L$}
          \State $z^{k,l} \sim q^k(\cdot \mid z^{k-1,l})$\Comment{Propose}\label{line:apg-propose}
          \State $w^{k,l} = \frac{\gamma^k(z^{k,l}) r^{k-1}(z^{k-1,l} \mid z^{k,l})}{\gamma^{k-1}(z^{k-1,l}) q^k(z^{k,l} \mid z^{k-1,l})}w^{k-1,l}$\Comment{Weigh}
      \EndFor
    \EndFor
    \State \textbf{Output} Weighted Samples $\{z^{K, l}, w^{K, l}\}_{l=1}^L$
\end{algorithmic}
\end{algorithm}

To understand how approximate Gibbs proposals can be incorporated into a SMC sampler, we will first define a sequential importance sampler (SIS), which decomposes the importance weight into a sequence of \emph{incremental} weights. SIS considers a sequence of unnormalized target densities $\gamma^1(z^1), \gamma^2(z^{1:2}), \dots, \gamma^K(z^{1:K})$. If we now consider an initial proposal $q^1(z^1)$, along with a sequence of conditional proposals $q^k(z^k \mid z^{1:k-1})$, then we can recursively construct a sequence of weights $w^k = v^k w^{k-1}$ by assuming $w^1 = \gamma^1(z^1) / q^1(z^1)$ and defining the incremental weight
\begin{align*}
    v^k 
    &=
    \frac{\gamma^k(z^{1:k})}{\gamma^{k-1}(z^{1:k-1}) q^k(z^k \mid z^{1:k-1})}.
\end{align*}
This construction ensures that, at step $k$ in the sequence, we have a weight $w^k$ relative to the intermediate  density $\gamma^k(z^{1:k})$ of the form (see Appendix~\ref{appendix:sis-weight})
\begin{align*}
    w^k
    = 
    \frac{\gamma^k(z^{1:k})}
         {q^1(z^1) \prod_{k'=2}^k q^{k'}(z^{k'} \mid z^{1:k'-1})}.
\end{align*}

We will now consider a specific sequence of intermediate densities that are defined using a \emph{reverse kernel} $r(z' \mid z)$:
\begin{align*}
    \gamma^k(z^{1:k})
    =
    p_\q(x,z^k) \prod_{k'=2}^{k} r(z^{k'-1} \mid z^{k'})
    .
\end{align*}
This defines a density on an \emph{extended space} which admits the marginal density
\begin{align*}
    \gamma^k(z^k) = \int \gamma^k(z^{1:k}) \: dz^{1:k-1} = p_\q(x, z^k).
\end{align*}
This means that at each step $k$, we can treat the preceding samples $z^{1:k-1}$ as \emph{auxiliary variables}; if we generate a proposal $z^{1:k}$ and simply disregard $z^{1:k-1}$, then the pair $(w^k, z^k)$ is a valid importance sample relative to $p_\q(z^k \mid x)$. If we additionally condition proposals on $x$, the incremental weight for this particular choice of target densities is
\begin{align}
    \label{eq:incremental-weight-forward-reverse}
    v^k 
    = 
    \frac{p_\q(x,z^k) \: r(z^{k-1} \mid x, z^k)}
         {p_\q(x,z^{k-1}) \: q(z^{k} \mid x, z^{k-1})}
    .
\end{align}
This construction is valid for any choice of \emph{forward kernel}  $q(z^k \mid z^{k-1})$ and reverse kernel $r(z^{k-1} \mid z^{k})$, but the weight variance will depend on the choice of kernel. Given a forward kernel, the \emph{optimal} reverse kernel is
\begin{align*}
    r(z^{k-1} \mid x, z^k)
    =   
    \frac{p_\q(x, z^{k-1})}
         {p_\q(x, z^k)}
    q(z^{k} \mid x, z^{k-1})
    .
\end{align*}
For this choice of kernel, the incremental weights are 1, which minimizes the variance of the weights $w^k$.

\begin{algorithm}[!tb]
  \caption{Amortized Population Gibbs Sampler}
  \label{alg:amortized-gibbs}
\begin{algorithmic}[1]
\small
\State \textbf{Input} Data x, Model $\small{p_\q (x, z)}$ \\
  \hspace{2.4em} Proposals $\small{q_\f (z | x), \{q_\f(z_b | x, z_{-b})\}_{b=1}^B}$
  \State $g_\phi = 0, g_\theta = 0$\Comment{Initialize gradient to 0}\label{line:init-grad}
  \For{$l = 1$ \textbf{to} $L$}\label{line:rws-loop}\Comment{Initial proposal}
      \State $z^{1,l} \sim q_\phi(\cdot \mid x)$\Comment{Propose}\label{line:rws-propose}
      \State $w^{1,l} = \frac{p_\theta(x, z^{1,l})}{q_\phi(z^{1,l} \mid x)}$\Comment{Weigh}\label{line:rws-weight}
  \EndFor
  \State $g_\phi = g_\phi + \sum_{l = 1}^L \frac{w^{1,l}}{\sum_{l' = 1}^L w^{1,l'}} \nabla_\f \log q_\phi(z^{1,l} \mid x)$\label{line:rws-grad-phi}
  \State $g_\theta = g_\theta + \sum_{l = 1}^L \frac{w^{1,l}}{\sum_{l' = 1}^L w^{1,l'}} \nabla_\q \log p_\theta(x, z^{1,l})$\label{line:rws-grad-theta}
  \For {$k = 2$ \textbf{to} $K$}\label{line:sweep-loop}\Comment{Gibbs sweeps}
    \State $\tilde{z}^{1:L}, \tilde{w}^{1:L} = z^{k-1,1:L}, w^{k-1,1:L}$ \label{line:apg-sweep-begin}
    \For{$b = 1$ \textbf{to} $B$}\label{line:block-loop}\Comment{Block updates}
      \State $\tilde{z}^{1:L}, \tilde{w}^{1:L}=\textsc{resample}(\tilde{z}^{1:L}, \tilde{w}^{1:L})$\label{line:resample} 
        \For{$l = 1$ \textbf{to} $L$}\label{line:apg-sample-loop}
          \State $\tilde{z}'^{\:l}_b \sim q_\phi(\cdot \mid x, \tilde{z}_{-b}^l)$\Comment{Propose}\label{line:apg-propose}
          \State \label{line:apg-weight} $\tilde{w}^l = \frac{p_\theta(x, \tilde{z}'^{\:l}_{b}, \tilde{z}^{\:l}_{-b}) q_\f(\tilde{z}^{\:l}_b \mid x, \tilde{z}^{\:l}_{-b})}{p_\theta(x, \tilde{z}^{\:l}_b, \tilde{z}^{\:l}_{-b}) q_\phi(\tilde{z}'^{\:l}_b \mid x, \tilde{z}^{\:l}_{-b})}\tilde{w}^l$ \Comment{Weigh}
          \State \label{line:apg-reassign}$\tilde{z}^{\:l}_b = \tilde{z}'^{\:l}_b$ \Comment{Reassign}
      \EndFor
      \State $g_\phi=g_\phi+ \sum_{l = 1}^L\frac{\tilde{w}^l}{\sum_{l'= 1}^L \tilde{w}^{l'}} \nabla_\f\log q_\phi(\tilde{z}^{\:l}_b \mid x, \tilde{z}^{\:l}_{-b})$\label{line:apg-grad-phi}
      \State $g_\theta = g_\theta + \sum_{l = 1}^L \frac{\tilde{w}^l}{\sum_{l' = 1}^L \tilde{w}^{l'}} \nabla_\q \log p_\theta(x, \tilde{z}^l)$ \label{line:apg-grad-theta}
     \EndFor
     \State $z^{k,\:1:L}, w^{k,\:1:L} = \tilde{z}^{1:L}, \tilde{w}^{1:L}$\label{line:apg-sweep-end}
     \vspace{0.5em}
  \EndFor
  \State \textbf{Output} Gradient $g_\phi$, $g_\theta$, Weighted Samples $\{z^{K,l},w^{K,l}\}_{l=1}^L$
\end{algorithmic}
\end{algorithm}

We here propose to use the approximate Gibbs kernel $q_\f(z^k |\, x, z^{k-1})$ as both the forward and the reverse kernel. When the approximate Gibbs kernel converges to the actual Gibbs kernel, this choice becomes optimal, since in limit the kernel satisfies detailed balance 
\begin{align*}
    p_\q(x, z^k) 
    \,
    q_\f(z^{k-1} \,|\, x, z^{k})
    =
    p_\q(x, z^{k-1}) 
    \,
    q_\f(z^{k} \,|\, x, z^{k-1}).
\end{align*}
In general, the weights $w^k$ in SIS will have a high variance when $z$ is high-dimensional or correlated. Moreover, we sample these variables $K$ times. When the approximate Gibbs kernel converges to the true kernel, this will not increase the variance of weights (since $v^k=1$ in limit). But during training variance of weights $w^k$ will increase with $k$, since we are jointly sampling an entire Markov chain.



To overcome this problem, SMC samplers interleave applications of the transition kernel with a \emph{resampling} steps (see Appendix~\ref{appendix:resample-algo} for Algorithm), which generates a new set of equally weighted samples by selecting samples from the current sample set with probability proportional to their weight.
APG samplers employ resampling after each block update $z_b \sim q_\phi(z_b | x, z_{-b})$, which results in equal incoming weights in the subsequent block update. This allows us to compute the gradient estimate of  block proposals based on the contribution of the incremental weight 
\begin{align*}
    v
    = 
    \frac{p_\q(x, z'_b, z_{-b}) \: q_\f(z_b \mid  x, z_{-b})}
         {p_\q(x, z_b, z_{-b}) \: q_\f(z'_b \mid  x, z_{-b})}
    .
\end{align*}
These incremental weights will generally have a much lower variance than full weights. This can improve sample-efficiency for both gradient estimation and inference, particularly in models with high-dimensional correlated variables.

We refer to this implementation of a SMC sampler as an \emph{amortized population Gibbs} sampler, and summarize all the steps of the computation in Algorithm~\ref{alg:amortized-gibbs}. In Appendix~\ref{appendix:proof-algo}, we prove that this algorithm is correct using an argument based on proper weighting. 

\section{Neural Sufficient Statistics}
\label{sec:neural-sufficient-statistics}

Gibbs sampling strategies that sample from exact conditionals rely on conjugacy relationships. We assume a prior and likelihood that can both be expressed as exponential families
\begin{align*}
    p(\x \mid \z) 
    &= 
    h(\x) \exp \{ 
        \eta(\z)^\top \: T(\x)  
        -\log A(\eta(\z)) \}, 
    \\
    p(\z ) 
    &= 
    h(\z) \exp \{ 
        \lambda^\top T(\z) 
        - \log A(\lambda) \}.
\end{align*}
where $h(\cdot)$ is a base measure, $T(\cdot)$ is a vector of sufficient statistics, and $A(\cdot)$ is a log normalizer. The two densities are jointly conjugate when $T(\z) = (\eta(\z), -\log A(\eta(\z)))$.
In this case, the posterior distribution lies in the same exponential family as the prior.
Typically, the prior $p(\z \mid \lambda)$ and likelihood $p(x \mid z)$ are not jointly conjugate, but it is possible to identify conjugacy relationships at the level of individual blocks of variables, 
\begin{align*}
    p(\z_b \mid \z_{-b}, \x)
    \propto
    h(z_b) 
    \exp \big\{ 
        &
        (\lambda_{b,1} + T(\x, \z_{-b}))^\top T(\z_b) 
        \\
        &
        -
        (\lambda_{b,2} \!+\! 1) 
        \log A(\eta(\z_b))
    \big\}
    .
\end{align*}
In general, these conjugacy relationships will not hold. However, we can still take inspiration to design variational distributions that make use of conditional independencies in a model. We assume that each of the approximate Gibbs updates $q_\f(\z_b \mid x, \z_{-b})$ is an exponential family, whose parameters are computed from a vector of prior parameters $\lambda$ and a vector of neural sufficient statistics $T_\f(\x, \z_{-b})$
\begin{align*}
    q_\f(\z_b \mid x, z_{-b}) 
    = 
    p(\z_b \mid \lambda + T_\f(x, z_{-b}))
    .
\end{align*}
This parameterization has a number of desirable properties. Exponential families are the largest-entropy distributions that match the moments defined by the sufficient statistics (see e.g.~\citet{wainwright2008graphical}), which is helpful when minimizing the inclusive KL divergence. In exponential families it is also more straightforward to control the entropy of the variational distribution. In particular, we can initialize $T_\f(x, z_{-b})$ to output values close to zero in order to ensure that we initially propose from a prior and/or regularize $T_\f(x, z_{-b})$ to help avoid local optima.

A useful case arises when the data $\x = \{x_1, \ldots, x_N\}$ are independent conditioned on $\z$. In this setting, it is often possible to partition the latent variables $\z = \{\z^\textsc{g}, \z^\textsc{l}\}$ into global (instance-level) variables $\z^{\textsc{g}}$ and local (datapoint-level) variables $\z^{\textsc{l}}$. 
The dimensionality of global variables is typically constant, whereas local variables $\z^\textsc{l} = \{\z^\textsc{l}_1, \ldots, \z^\textsc{l}_N\}$ have a dimensionality that increases with the data size $N$. For models with this structure, the local variables are typically conditionally independent $z^\textsc{L}_n \bot z^\textsc{L}_{-n} \mid x, z^\textsc{g}$, which means that we can parameterize the variational distributions as
\begin{align*}
    \tilde{\lambda}^\textsc{g}
    &= 
    \lambda^\textsc{g} 
    + 
    \sum_{n=1}^N 
    T^\textsc{g}_\f(\x_n, z^\textsc{l}_{n}),
    &
    \tilde{\lambda}^\textsc{l}_n
    &=
    \lambda^\textsc{l}_n + T^\textsc{l}_\f(\x_n, \z^\textsc{g}).
\end{align*}
The advantage of this parameterization is it allows us to train approximate Gibbs updates for global variables in a manner that scales dynamically with the size of the dataset, and appropriately adjusts the posterior variance according to the amount of available data. See Appendix~\ref{appendix:architecture} for detailed explanations of how we use this parameterization to design the neural proposals in our experiments.
\begin{figure*}[t!]
  \centering
  \begin{subfigure}[t]{0.5\textwidth}
  \includegraphics[width=85mm]{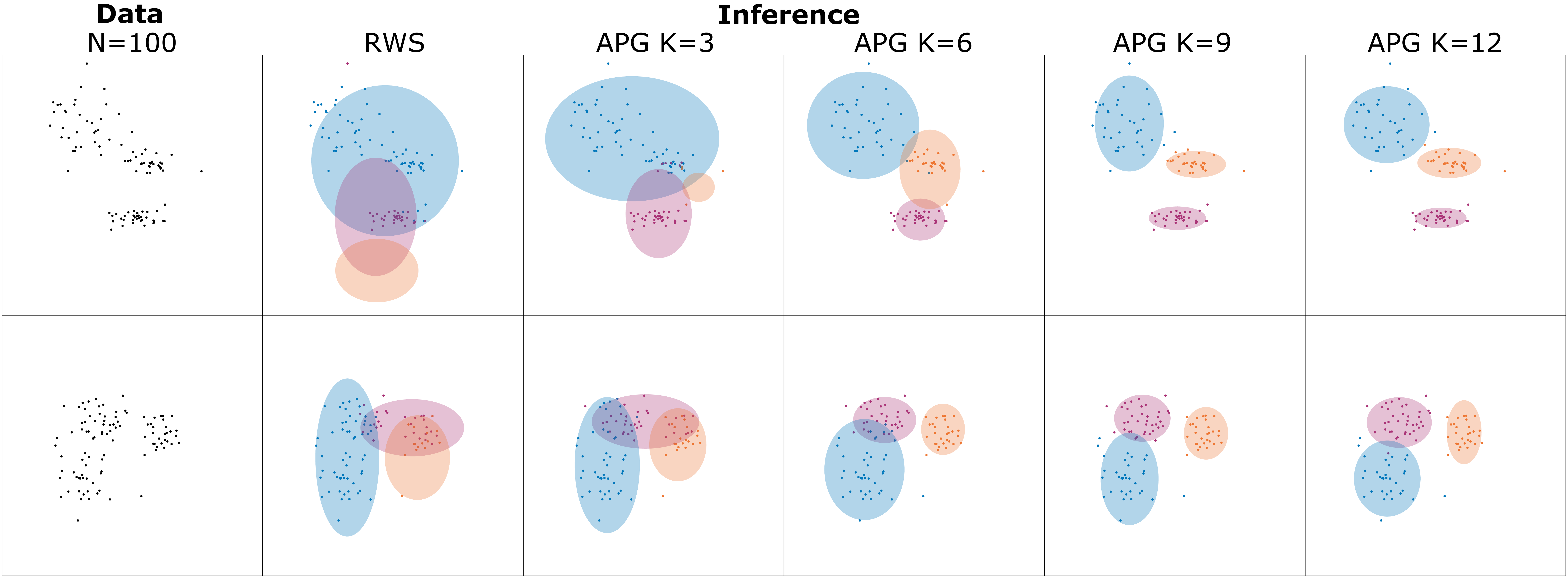}
  \caption{GMM}
  \label{fig:samples-gmm}
  \end{subfigure}%
  \begin{subfigure}[t]{0.5\textwidth}
  \includegraphics[width=85mm]{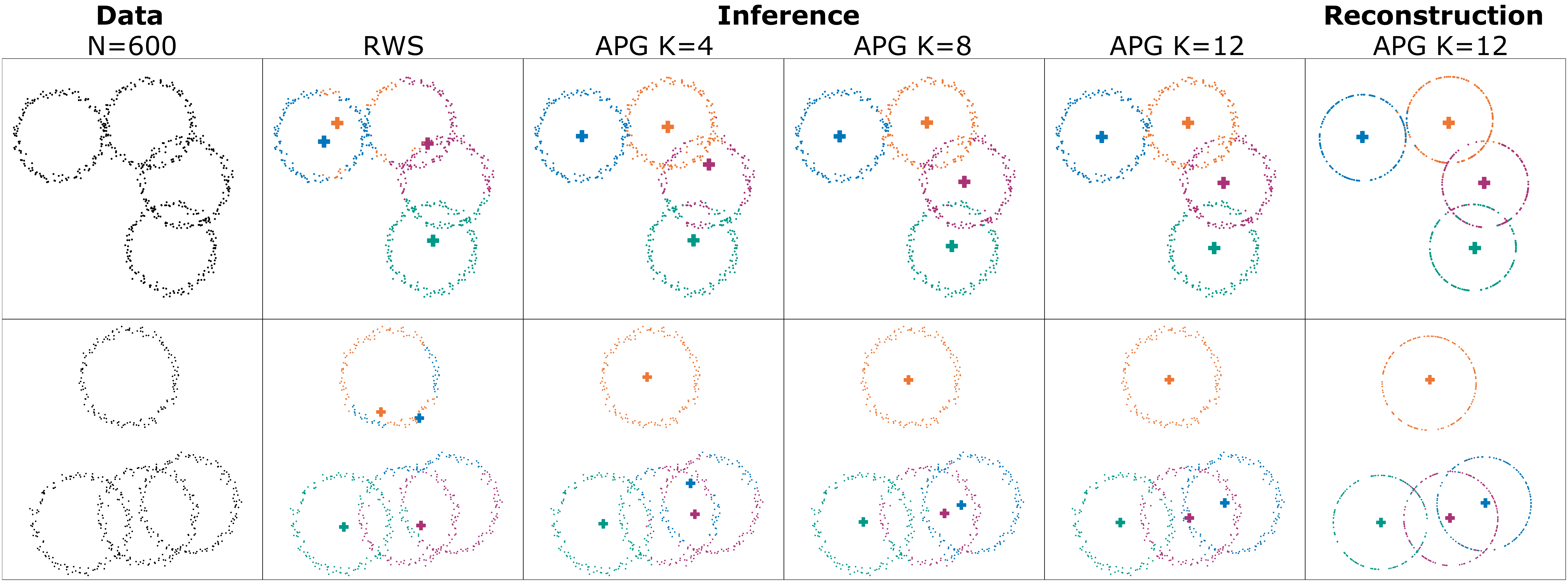}
  \caption{DMM}
  \label{fig:samples-dgmm}
  \end{subfigure}
  \caption{Visualization of a single sample for (\textbf{a}) GMM instances with $N=100$ data points and (\textbf{b}) DMM instances with $N=600$ data points. The left columns show the data instance, the second column the inference result from RWS, and the subsequent columns the inference result after $K$ APG sweeps. In addition the rightmost column in (\textbf{b}) shows the reconstruction from the generative model.
  Unlike RWS, APG successfully learns the model parameters and an inference model and its performance improves with increasing $K$.}
  \label{fig:samples-mixture}
\end{figure*}

\section{Related Work}

Our work fits into a line of recent methods for deep generative modeling that seek to improve inference quality, either by introducing auxiliary variables~\cite{maaloe2016auxiliary, ranganath2016hierarchical}, or by performing iterative updates~\cite{marino2018iterative}. 
Work by \citet{hoffman2017learning} applies Hamiltonian Monte Carlo to samples that are generated from the encoder, which serves to improve the gradient estimate w.r.t.~$\theta$ (Equation~\ref{eq:grad-theta}), while learning the inference network using a standard reparameterized ELBO objective. 
\citet{li2017approximate} also use MCMC to improve the quality of samples from an encoder, but additionally use these samples to train the encoder by minimizing the inclusive KL divergence relative to the filtering distribution of the Markov chain. As in our work, the filtering distribution after multiple MCMC steps is intractable. \citet{li2017approximate} therefore use an adversarial objective to minimize the inclusive KL. 
Neither of these methods consider block decompositions of latent variables, nor do they learn kernels. 


\citet{salimans2015markov} derive a stochastic lower bound for variational inference which uses an importance weight defined in terms of forward and reverse kernels in MCMC steps, similar to Equation~\ref{eq:incremental-weight-forward-reverse}.
\citet{caterini2018hamiltonian} extend this work by optimally selecting reverse kernels (rather than learning them) using inhomogeneous Hamiltonian dynamics.
\citet{huang2018improving} learn a sequence of transition kernels that performs annealing from the initial encoder to the posterior.
\citet{ruiz2019contrastive} define a contrastive divergence which can be tractably optimized w.r.t~variational parameters $\phi$.
Since all of these methods minimize an exclusive KL, rather than an inclusive KL, the gradient estimates rely on reparameterization, which makes them inapplicable to models with discrete variables. 
Moreover, these methods perform a joint update on all variables, while we consider the task of learning conditional proposals.


\citet{wang2018meta} develop a meta-learning approach to learn Gibbs block conditionals. This work assumes a setup in which it is possible to sample $x, z$ from the true generative model $p(x,z)$. This circumvents the need for estimators in Equation~\ref{eq:grad-self-normalized} and~\ref{eq:grad-theta}, which are necessary when we wish to learn the generative model. Similar to our work, this approach minimizes the inclusive KL, but uses the learned conditionals to define an (approximate) MCMC sampler, rather than using them as proposals in a SMC sampler. This work also has a different focus from ours, in that it primarily seeks to learn block conditionals that have the potential to generalize to previously unseen graphical models.

\section{Experiments}
\label{sec:experiments}

We evaluate APG samplers on three different tasks. 
We begin by analysing APG samplers on a Gaussian mixture model (GMM) as an exemplar of a model in the conjugate-exponential family. 
This experiment allows us to analytically compute the conditional posterior to verify if the learned proposals indeed converge to the Gibbs kernels.
Here APG samplers outperform other iterative inference methods even when using a smaller computational budget. 
Next we consider a deep generative mixture model (DMM) that incorporates a neural likelihood and use APG samplers to jointly train the generative and inference model. We demonstrate the scalability of APG samplers by performing accurate inference at test-time on instances with 600 points. 
In our third experiment, we consider an unsupervised tracking model for multiple bouncing MNIST digits. We extend the task proposed by \citet{srivastava2015unsupervised} to up to five digits, and learn both a deep generative model for videos and an inference model that performs tracking. At test time we show that APG easily scales beyond previously reported results for a specialized recurrent architecture \cite{kosiorek2018sequential}. The results for each of these tasks constitute significant advances relative to the state of the art. 
APG samplers are not only able to scale to models with higher complexity, but also provides a general framework for performing inference in models with global and local variables, which can be adapted to a variety of model classes with comparative ease.

\subsection{Baselines}
\label{sec:baselines}

We compare our APG sampler with three baseline methods. The first is RWS. The second is a bootstrapped population Gibbs (BPG) sampler, which uses exactly the same sampling scheme as in Algorithm~\ref{alg:amortized-gibbs}, but proposes from the prior rather than from  approximate Gibbs kernels. This serves  to evaluate whether learning proposals improves the quality of inference results. The third is a method that augments RWS with Hamiltonian Monte Carlo (HMC) updates, which is analogous to the method proposed by \citet{hoffman2017learning}. This serves to compare Gibbs updates to those that can be obtained with state-of-the-art MCMC methods.


In the HMC-RWS baseline, we employ the standard RWS gradient estimators from Equations~\ref{eq:grad-phi-rws} and~\ref{eq:grad-theta}, but update samples from the standard encoder using HMC. In the bouncing MNIST model, all variables are continuous and are updated jointly. In the GMM, we use HMC to sample from the marginal for continuous variables (by marginalizing over cluster assignments via direct summation), and then sample discrete variables from enumerated Gibbs conditionals to evaluate the full log joint $\log p_\theta(x,z)$. For the DMM we perform the same procedure, but use HMC to sample from the conditional distribution given discrete variables, because direct summation is not possible in this model. 


Unless otherwise stated, we compare methods under an equivalent computational budget. We compare an APG sampler that performs $K$ sweeps with $L$ particles to RWS with $K \cdot L$ particles. When comparing to HMC-RWS methods, we perform $K$ updates for $L$ particles with a number of leapfrog steps $\text{LF in \{1,5,10\}}$, which corresponds to a 1x, 5x, and 10x computational budget as measured in terms of the number of evaluations of the log joint. We resort to report the log joint $\log p_\q(\x, z)$ as an evaluation metric, because the marginal $q_\f(z^k | x)$ is intractable. A summary of these results can be found in Table~\ref{table:log-joint-all}.

\subsection{Gaussian Mixture Model}
\label{sec:gmm}
To evaluate whether APG samplers can learn the exact Gibbs updates in conditionally conjugate models, we consider a 2D Gaussian mixture model 
\begin{align*}
    \mu_m, \tau_m &\sim \text{Normal-Gamma}(\mu_0, \nu_0, \alpha_0, \beta_0)
    , \\
    c_n &\sim \mathrm{Cat}(\pi), \,
    x_n | c_n \,=\, m \sim \text{Normal}(\mu_m, 1 / \tau_m)
    \\
    m &= 1,2..,M, n = 1,2,..,N
    .
\end{align*}
Each cluster has a vector-valued mean $\mu_m$ and a diagonal precision $\tau_m$, for which we define an elementwise NormalGamma prior with $\mu_0=0, \nu_0=0.1, \alpha=2, \beta=2$.

In this model, the global variables $z^\textsc{g} = \{\mu_{1:M}, \tau_{1:M}\}$ are means and precisions of $M$ mixture components; The local variables $z^\textsc{l} = \{c_{1:N}\}$ are cluster assignments of $N$ points. Conditioned on cluster assignments, the Gaussian likelihood is conjugate to a normal-gamma prior with pointwise sufficient statistics $T(x_n, c_n)$
\begin{align*}
    \Big\{\mathrm{I}[c_n \!=\! m], 
        ~\mathrm{I}[c_n \!=\! m] \, x_n, 
        ~\mathrm{I}[c_n \!=\! m] \, x_n^2 
        ~\Big\vert~m \!=\! 1,2,\dots,M 
    \Big\}
\end{align*}
where the identity function $\mathrm{I}[c_n \!=\! m]$ evaluates to 1 if the equality holds, and 0 otherwise.

We employ a block update strategy that iterates between blocks $\{\mu_{1:M}, \tau_{1:M}\}, \{c_{1:N}\}$ by learning neural proposals $q_\f(\mu, \tau \mid x, c)$ and $q_\f(c \mid x, \mu, \tau)$.
We use pointwise neural sufficient statistics modeled based on the ones in the analytic form.
We train our model on 20,000 GMM instances, each containing $M = 3$ clusters and $N = 60$ points; We train with $K=5$ sweeps, $L=10$ particles, $20$ instances per batch, learning rate $2.5\times10^{-4}$, and $2\times10^5$ gradient steps.
\begin{figure}[t!]
\centering
\includegraphics[width=\columnwidth]{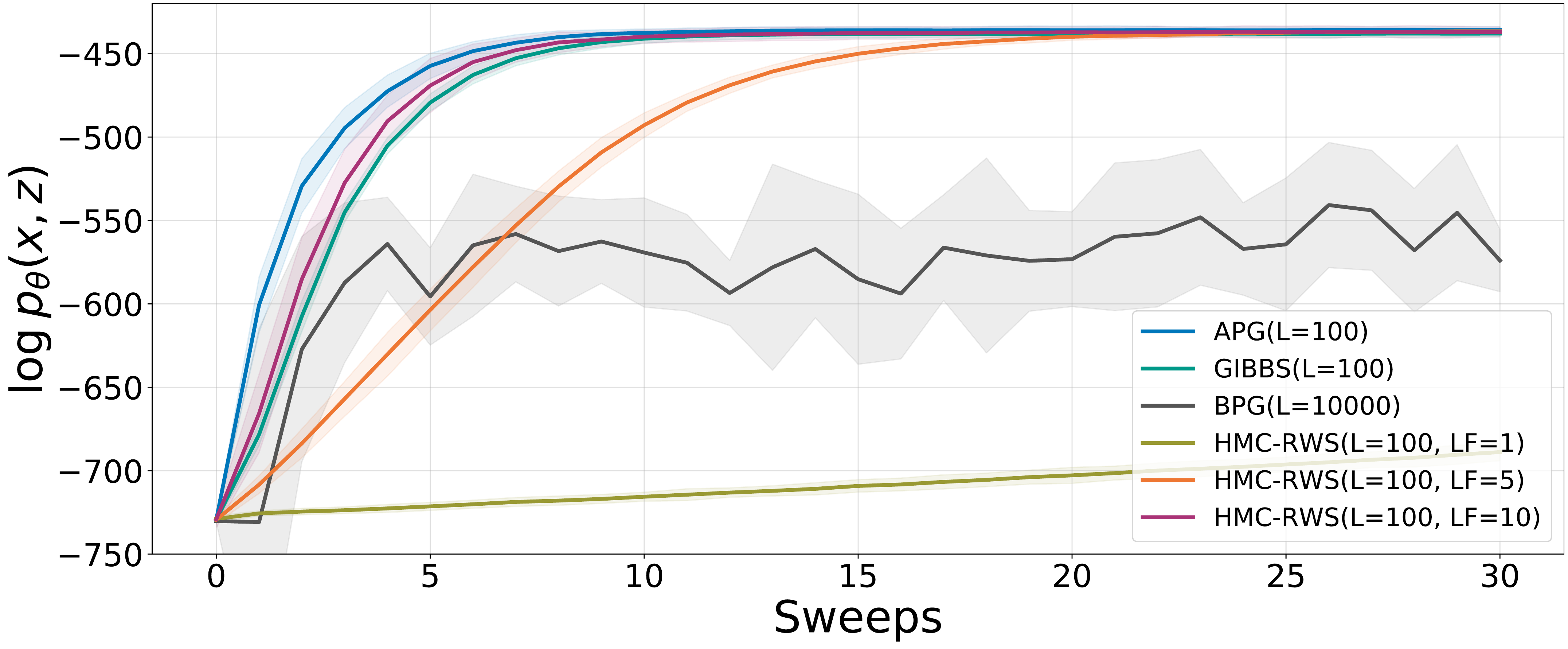}
  \caption{$\log p_\theta(x, z)$ as a function of number of sweeps for a GMM test instance. We perform each method with $L$ particles. For HMC-RWS, we perform multiple leapfrog steps \text{LF} in $\{1,5,10\}$. To achieve results comparable to APG and Gibbs samplers, HMC-RWS needs 10 times the computational budget, while BPG underperforms even with 100 times the computational budget.}
  \label{fig:convergence-gmm}
  \vspace{-0.75em}
\end{figure}

Figure~\ref{fig:samples-gmm} shows single samples in 2 test instances, each containing $N=100$ points. Even when using a parameterization that employs neural sufficient statistics, the RWS encoder fails to propose reasonable clusters, whereas the APG sampler converges within 12 sweeps in GMMs with more variables than training instances.

We compare the APG samplers with true Gibbs samplers and the baselines in section~\ref{sec:baselines}. Figure~\ref{fig:convergence-gmm} shows that the APG sampler converges to the $p_\q(x, z)$ achieved the true Gibbs sampler; it outperforms HMC-RWS even when we run it with 10 times the computation budget; and learned proposals substantially improve on inference since BPG does not converge even with 100x computational budget. In addition, we verify the approximate Gibbs kernel converges to the true Gibbs kernel (see Appendix~\ref{appendix-kl-gmm-training}).

\begin{figure}[!t]
\centering
\includegraphics[width=\columnwidth]{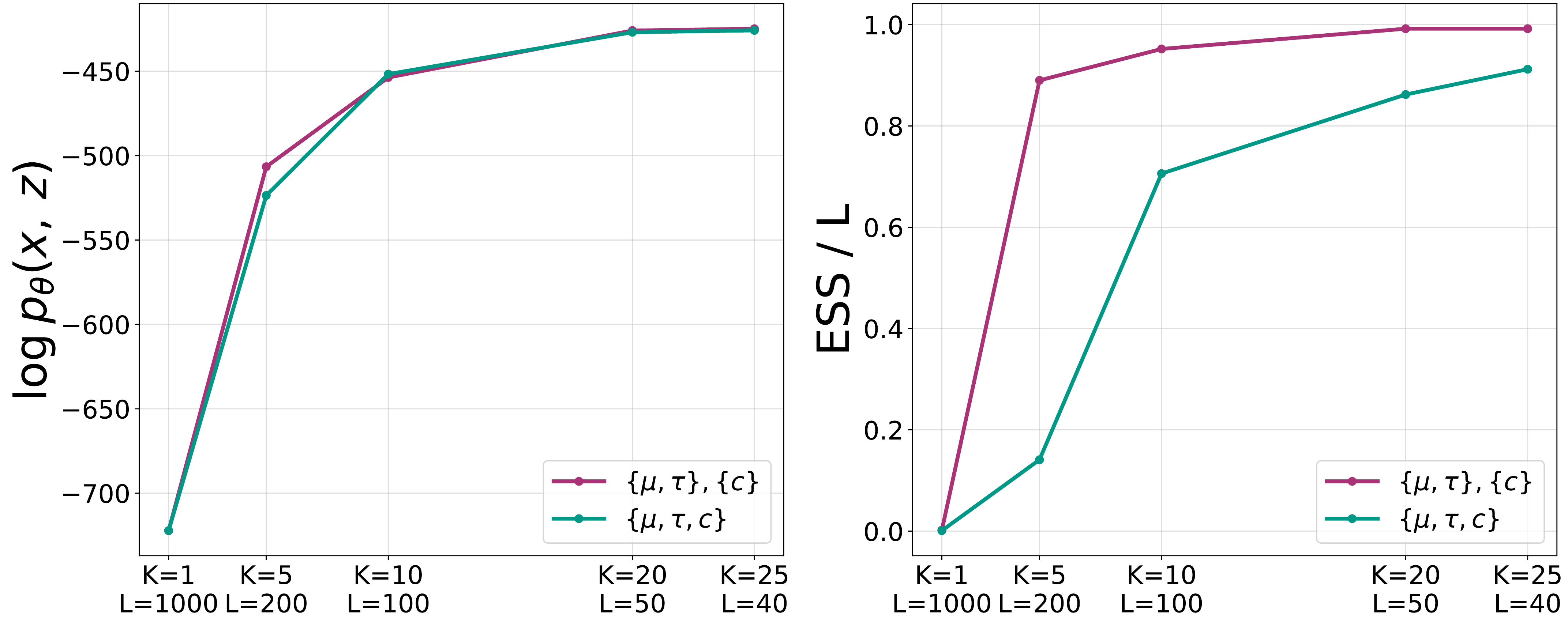}
  \caption{$\log p(x, z)$ (left) and ESS (right) as a function of number of sweeps for APG samplers with different block update strategies and same overall sampling budget $K \cdot L = 1000$. A higher ESS is achieved by performing more sweeps with fewer samples.}
  \label{fig:fixed-budget-gmm}
  \vspace{-0.75em}
\end{figure}

\begin{table*}[t!]
    \centering
    \vspace{-1em}
    \caption{$\log p_\q (x, z)$ averaged over test corpus for RWS, BPG, HMC-RWS, and Gibbs Sampling for different numbers of sweeps $K$.
    We run HMC-RWS with 1, 5, and 10 leapfrog integration steps, which corresponds to 1, 5, and 10 times the computational budget used by APG with the same number of sweeps.
    The GMM test corpus contains $20,000$ instances, each of which has $N=100$ data points. The DMM test corpus contains $20,000$ instances with $N=600$ data points per instance. Bouncing MNIST's test corpus contains $10,000$ instances each containing $T=100$ time steps and $D=5$ digits.
    In all tasks APG with $K=20$ performs the best.}
    \vspace{0.5em}
    \addtolength{\tabcolsep}{-2pt} 
    \begin{tabularx}{\textwidth}{cccccccccc}
    \toprule
     & RWS & BPG & HMC-RWS & HMC-RWS & HMC-RWS & APG & APG & APG & GIBBS \\
     & & K=20 & K=20, LF=1 & K=20, LF=5 & K=20, LF=10 & K=5 & K=10 & K=20 & K=20 \\
    \midrule
    GMM ($\times10^2$)& $-7.651$ & $-6.688$ & $-6.594$ & $-5.420$ & $-4.673$ & $-4.816$ & $-4.556$ & $\textbf{-4.469}$ & $-4.573$ \\
    DMM ($\times10^3$) & $-3.172$ & $-4.204$ & $-2.185$ & $-2.184$ & $-2.184$ & $-2.050$ & $-2.040$ & $\textbf{-2.035}$ & -- \\
    BC-MNIST ($\times10^5$) & $-1.673$ & $-1.884$ & $-1.538$ & $-1.405$ & $-1.303$ & $-0.706$ & $-0.652$ & $\textbf{-0.620}$ & --\\
    \bottomrule
    \vspace{-1.25em}
\end{tabularx}
\label{table:log-joint-all}
\end{table*}

\begin{figure*}[!t]
  \centering
  \includegraphics[width=1.0\textwidth]{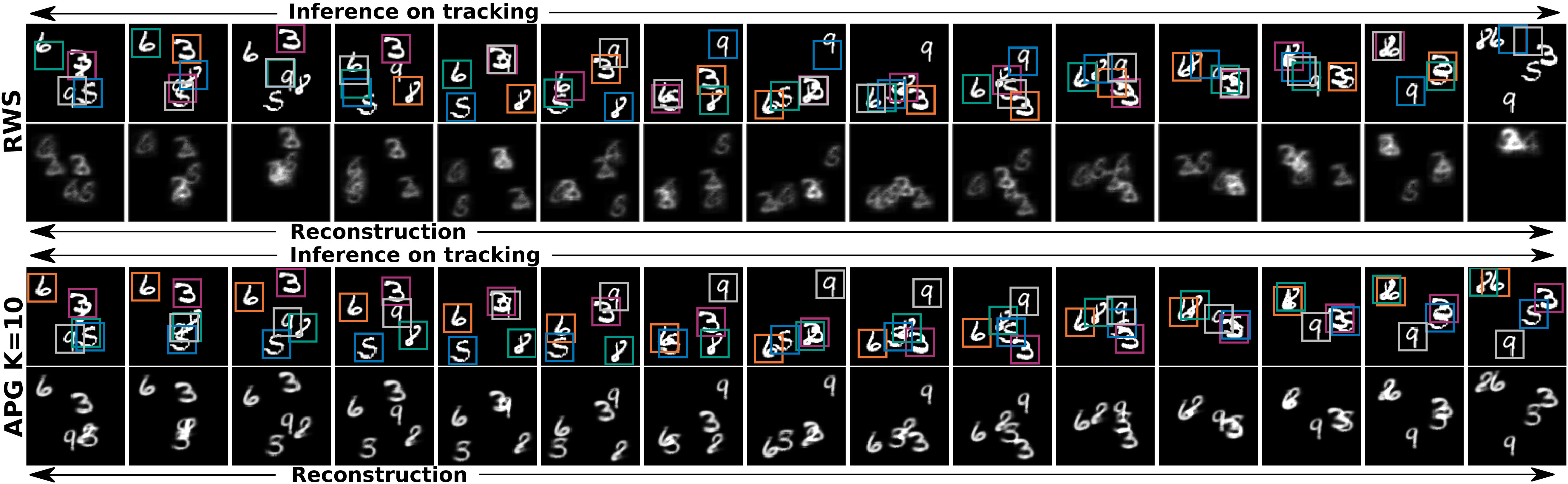}
  \caption{A single sample from variational distribution in one video instance with $T=100$ time steps and $D=5$ digits. The visualization is truncated to the first 15 time steps due to limited space (see Appendix~\ref{appendix:full-recons} for full time series and Appendix~\ref{appendix:bmnist-comparison-rws} for more examples). The sample is initialized from RWS (top) and updated by $K=10$ APG sweeps (bottom). The APG sampler improves the inference results in tracking and learns better MNIST digit embeddings that improve reconstruction.}
  \label{mnist-qualitative}
\end{figure*}

Moreover, we compare joint block updates $\{\mu, \tau, c\}$ with decomposed blocks $\{\mu, \tau\}, \{c\}$, for varying number of sweeps (see Figure~\ref{fig:fixed-budget-gmm}). 
We additionally assess sample quality using the effective sample size (ESS)
\begin{align*}
    \frac{\text{ESS}}{L} 
    = 
    \frac{(\sum_{l=1}^L w^{k,l})^2}
         {L \sum_{l=1}^L (w^{k,l})^2}
    .
\end{align*}
We can see that it is more effective to perform more sweeps $K$ with a smaller number of particles $L$ for a fixed computation budget. We also see that the decomposed block updates result in higher sample quality.


\subsection{Deep Generative Mixture Model}

In the deep generative mixture model (DMM) $p_\q(\x, \z)$ our data is a corpus of 2D ring-shaped mixtures. The generative model takes the form
\begin{align*}
    \mu_m &\sim \text{Normal}(\mu, 
    \sigma_0^2 \, I), 
     \\
    c_n &\sim \mathrm{Cat}(\pi), h_n \sim \text{Beta}(\alpha, \beta),  \\
    x_n \,|\, c_n=m &\sim \text{Normal}(g_\q(h_n) + \mu_m, \sigma^2_{\epsilon} I)
    \\
    m &= 1, 2, ..., M, \:, \: n = 1, 2, ..., N
    .
\end{align*}
The global variables $z^\textsc{g} = \{\mu_{1:M}\}$ are the centers of $M$ components; The local variables $z^\textsc{l} = \{c_{1:N}, h_{1:N}\}$ are cluster assignments and 1D embeddings of $N$ points.  Given an assignment $c_n$ and an embedding $h_n$ that goes into a MLP decoder $g_\q$, we sample a data point $x_n$ with Gaussian noise. We choose $\mu=0, \sigma_0=10, \alpha=1, \beta=1, \sigma_\epsilon=0.1$. 

We employ a block update strategy that iterates between blocks $\{\mu_{1:M}\}$, $\{c_{1:N}, h_{1:N}\}$ by learning neural proposals $q_\f(\mu \mid x, c, h)$ and $q_\f(c, h \mid x, \mu)$.
We train our model on 20,000 instances, each containing $M = 4$ clusters and $N=200$ points; We train our model with $K=8$ sweeps, $L=10$ particles, 20 instances per batch, learning rate $10^{-4}$, and $3\times10^5$ gradient steps.

Once again, we compare the APG sampler with the encoders using RWS. Figure~\ref{fig:samples-dgmm} shows visualization of single samples in 2 test instances, each containing $N=600$ points. In both mixture model experiments, we can tell that the APG samplers scale to a much larger number of variables, whereas a standard encoder trained using RWS fails to produce reasonable proposals.

\subsection{Time Series Model -- Bouncing MNIST}

In the bouncing MNIST model, our data is a corpus of video frames that contain multiple moving MNIST digits. To demonstrate the scalability of APG samplers, we train both a deep generative model and an inference model with $600,000$ video instances, each of which contains $10$ time steps and $3$ digits. At test time we evaluate the model on instances that contain up to $100$ time steps and $5$ digits. 

Consider a sequence of video frames $x_{1:T}$, which contains $T$ time steps and $D$ digits. We assume that the $k$th digit in the $t$th frame $x_t$ can be represented by some digit feature $z^{\mathrm{what}}_{d}$ and a time-dependent position variable $z^{\mathrm{where}}_{d, t}$. The deep generative model takes the form
\begin{align*}
    z^{\mathrm{what}}_{d} 
    \,&\sim \,
    \text{Normal}(0, \, I)
    ,
    \\
    z^{\mathrm{where}}_{d, 1} \,&\sim\, \text{Normal}(0,\, I)
    ,
    \,
    z^{\mathrm{where}}_{d, t} 
    \sim 
    \text{Normal}(z^{\mathrm{where}}_{d, t - 1}, \, \sigma^2_0 I),
    \\
    x_t 
    &\sim
    \mathrm{Bernoulli}
    \Big(
        \sigma
        \Big(
            \sum_{d} \mathrm{ST}
            \big(
                g_\q (\z_{d}^{\mathrm{what}})
                , 
                \:
                z^{\mathrm{where}}_{d, t}
            \big)
        \Big)
    \Big)
    \\
    d &= 1, 2, ..., D, t = 1, 2,.., T
    .
\end{align*}
In this model, the global variables $z^\textsc{g} = \{\z_{1:D}^{\mathrm{what}}\}$ are latent codes of the $D$ digits, as those digits don't change across the frames; The local variables $z^\textsc{l} = \{\z_{1:D, 1:T}^{\mathrm{where}}\}$ are time-dependent positions. The hyper-parameter is $\sigma_0=0.1$. The dimensionalities are $z^{\mathrm{what}}_{d} \in \mathbb{R}^{10}$, $z^{\mathrm{where}}_{d, t} \in \mathbb{R}^2$.

\begin{table}[!t]
    \centering
    \caption{Bouncing MNIST performance. Mean-squared error between test instance and its reconstruction across 5000 video instances with different number of time steps $T$ and digits $D$. We use a fixed sampling budget of $K \cdot L=1000$. APG samplers achieve substantial improvements relative to RWS and HMC-RWS.}
    \begin{tabularx}{\columnwidth}{ccccc}
    \toprule
        & RWS & HMC-RWS & APG & APG \\
        &  &  K=20, LF=10 & K=10 & K=20 \\
    \midrule
    D=5, T=20 & 265.1 & 261.2 & 117.7 & \textbf{103.8}\\
    \hspace{0.5em}D=5, T=100 & 272.1 & 267.9  & 115.4 & \textbf{102.3}\\
    D=3, T=20 & 134.0 & 124.1 & 60.4 & \textbf{58.9} \\   
    \hspace{0.5em}D=3, T=100 & 137.3 & 127.4 & 56.0 & \textbf{55.7} \\
    \bottomrule
    \end{tabularx}
    \label{table:mse-bmnist}
\end{table}

Given each digit feature $\z_{d}^{\mathrm{what}}$, a MLP decoder $g_\q$ generates a MNIST digit image $g_\q (\z_{d}^{\mathrm{what}})$ of the size 28$\times$28. Then a spatial transformer \cite{jaderberg2015spatial} ST will map each digit image onto a 96$\times$96 canvas using the corresponding position $z^{\mathrm{where}}_{d, t}$. Finally we sample the frame $x_t$ from a Bernoulli distribution, the parameter of which is the composition of that canvas with some activation function $\sigma$.

Compared with joint prediction $\{z_d^{\mathrm{what}}, z_{d, 1:T}^{\mathrm{where}}\}$, it is easier to guess the features $\{z_d^{\mathrm{what}}\}$ of digit $d$ given its positions $\{z_{d, 1:T}^{\mathrm{where}}\}$ and vice versa. Moreover, we can predict the positions in a step-by-step manner and resample at each sub-step of $\{z_{d, t}^{\mathrm{where}}\}$, which will reduce the variance of the importance weights. Thus we employ a block update strategy that iterates over $T + 1$ blocks
\begin{align*}
    \{z_{1:D}^{\mathrm{what}}\}, \{z_{1:D, 1}^{\mathrm{where}}\}, \{z_{1:D, 2}^{\mathrm{where}}\}, \dots, \{z_{1:D, T}^{\mathrm{where}}\}.
\end{align*}
In each block we sequentially predict variables for each digit $d=1, 2, \dots, D$ (see Appendix~\ref{appendix:architecture} for detail). 

We train our model on 60000 bouncing MNIST instances, each containing $T=10$ time steps and $D=3$ digits, with $K=5$ sweeps, $L=10$ particles, $5$ instances per batch, learning rate $10^{-4}$, and $1.2\times10^7$ gradient steps.

Figure~\ref{mnist-qualitative} shows that the APG sampler substantially improves inference and reconstruction results over RWS encoders, and scales to test instances with hundreds of latent variables. Moreover we compute the mean squared error between the video instances and their reconstructions. Table~\ref{table:mse-bmnist} shows that APG sampler can substantially improve the sample from RWS encoder, and scales up to 100 time steps and 5 digits.


\section{Conclusion}
We introduce APG samplers, a framework for amortized variational inference based on adaptive importance samplers that iterates between updates to blocks of variables. To appropriately account for the size of the input data we developed a novel parameterization in terms of neural sufficient statistics inspired by sufficient statistics in conjugate exponential families.
We show that APG samplers can train structured deep generative models with hundreds of instance-level variables in an unsupervised manner, and compare favorably to existing amortized inference methods in terms of computational efficiency.

APG samplers offer a path towards the development of deep generative models that incorporate structured priors to provide meaningful inductive biases in settings where we have little or no supervision. These methods have particular strengths in problems with global variables, but more generally make it possible to design amortized approaches that exploit conditional independence. Our parameterization in terms of neural sufficient statistics makes it comparatively easy to design models that scale to much larger number of variables and thus generalize to datasets that vary in size. 


\section*{Acknowledgements}
We would like to thank our reviewers and the area chair
for their thoughtful comments. This work was supported by the Intel Corporation, NSF award 1835309, startup funds from Northeastern University, the Air Force Research Laboratory (AFRL), and DARPA. Tuan Anh Le was supported by AFOSR award FA9550-18-S-0003.

\bibliography{icml-2020-references}
\bibliographystyle{icml2020}

\newpage
\appendix
\onecolumn
\icmltitle{Supplementary Material: Amortized Population Gibbs Samplers\\ with Neural Sufficient Statistics}
\icmlkeywords{Machine Learning, ICML}
\section{Gradient of the generative model}
\label{appendix:grad-theta}
We show that the gradient of the marginal $\nabla_\q \, \log \, p_\q(x)$ can be estimated using self-normalized importance sampling. First of all, we express the expected gradient of the log joint as
\begin{align*}
    \mathbb{E}_{p_\q(\z | \x)} 
    \left[
    \nabla_\q \log p_\q(\x, \z)
    \right]
    =
    \mathbb{E}_{p_\q(\z | \x)} 
    \left[
    \nabla_\q \log p_\q(\x) + \nabla_\q \log p_\q(\z | \x)
    \right]
    =
    \mathbb{E}_{p_\q(\z | \x)} 
    \left[
    \nabla_\q \log p_\q(\x) 
    \right]
    =
    \nabla_\q \log p_\q(\x)
\end{align*}
Here we make use of a standard identity that is also used in likelihood-ratio estimators
\begin{align*}
\mathbb{E}_{p_\q(\z | \x)}
\left[
    \nabla_\q \log p_\q(\z | \x)
\right] 
=
\int p_\q(\z | \x) \nabla_\q \log p_\q(\z | \x) \: dz
=
\int \nabla_\q p_\q(\z | \x) \: dz
=
\nabla_\q \int p_\q(\z | \x) \: dz
=
\nabla_\q 1
= 
0
\end{align*}
Therefore, we have the the following equality
\begin{align*}
\nabla_\q \log p_\q(\x) 
= 
\mathbb{E}_{p_\q(\z | \x)} 
\left[
\nabla_\q \log p_\q(\x, \z)
\right]
\simeq
\sum_{l=1}^L
\frac{w^l}{\sum_{l'} w^{l'}}
\nabla_\q
\log p_\q(x, z^l).
\end{align*}
which is the self-normalized gradient estimator in Equation ~\ref{eq:grad-theta}.

\section{Importance weights in sequential importance sampling}
\label{appendix:sis-weight}
We will prove the form of importance weight $w^k$ in sequential importance sampling. At step $k=1$, we use exactly the standard importance sampler, thus it is obvious that the following is a valid importance weight
\begin{align*}
    w^1 = \frac{\gamma^1(z^1)}{q^1(z^1)}.
\end{align*}
When step $k>2$, we are going to prove that the importance weight relative to the intermediate densities has the form
\begin{align}
    \label{appendix:eq:sis-weight}
    w^k
    = 
    \frac{\gamma^k(z^{1:k})}
         {q^1(z^1) \prod_{k'=2}^k q^{k'}(z^{k'} \mid z^{1:k'-1})}.
\end{align}

At step $k=2$, the importance weight is defined as 
\begin{align*}
    w^k 
    &= 
    v^{2} \: w^1
    \: =
    \frac{\gamma^2(z^{1:2})}{\gamma^{1}(z^{1})\:q^2(z^2 \mid z^{1})} \frac{\gamma^1(z^1)}{q^1(z^1)}
    \: = \frac{\gamma^2(z^{1:2})}{q^1(z^1) \: q^2(z^2 \mid z^{1})}.
\end{align*}
which is exactly that form. Now we prove weights in future steps by induction. At step $k\geq 2$, we assume that the weight has the form in Equation~\ref{appendix:eq:sis-weight}, i.e.
\begin{align*}
    w^k
    = 
    \frac{\gamma^k(z^{1:k})}
         {q^1(z^1) \prod_{k'=2}^k q^{k'}(z^{k'} \mid z^{1:k'-1})}.    
\end{align*}
then at step $k+1$, the importance weight is the product of incremental weight and incoming weight 
\begin{align*}
    w^{k+1}
    =
    v^{k+1} \: w^k
    =
    \frac{\gamma^{k+1}(z^{1:k+1})}{\gamma^{k}(z^{1:k})\:q^{k+1}(z^{k+1} \mid z^{1:k})}
    \frac{\gamma^k(z^{1:k})}
         {q^1(z^1) \prod_{k'=2}^k q^{k'}(z^{k'} \mid z^{1:k'-1})}
    =
    \frac{\gamma^{k+1}(z^{1:k+1})}{q^1(z^1) \prod_{k'=2}^{k+1} q^{k'}(z^{k'} \mid z^{1:k'-1})}
    .    
\end{align*}
Thus the importance weight $w^k$ has the form of Equation~\ref{appendix:eq:sis-weight} at each step $k>2$ in sequential importance sampling.
\section{ Derivation of Posterior Invariance}
\label{appendix:posterior-invariance}
We consider a \emph{sweep} of conditional proposals at step $K$ as
\begin{align}
    \label{eq:approx-gibbs-kernel}
    p_\q \big( \z^k \mid \x, \z^{k-1} \big)
    &=
    \prod_{b=1}^B
    p_\q (\z^k_b \mid \x, \z^{k}_{\prec b}, \z^{k-1}_{\succ b})
    ,
\end{align}
where $\z_{\prec b} = \{z_i \mid i < b\}$ and $\z_{\succ b} = \{z_i \mid i > b\}$. Additionally we define $\z_{\preceq b} = \{z_i \mid i \leq b\}$. 

We will show that any partial update within a sweep, i.e.
\begin{align}
    p_\q \big( \z^k_{\preceq b} \mid \x, \z^{k-1} \big)
    &=
    \prod_{v=1}^b
    p_\q (\z^k_v \mid \x, \z^{k}_{\prec v}, \z^{k-1}_{\succ v})
    ,
    \quad
    \forall b\in\{1, 2,..., B\}
\end{align}
will leave the posterior invariant. In fact, for any choice of $b$ we have
\begin{align*}
    \int 
    \:
    p_\q(\z^{k-1} \,\mid\, \x) 
    \: 
    p_\q(\z^k_{\preceq b} \,\mid\, x,\, \z^{k-1}) 
    \:
    dz^{k-1}_{\preceq b} 
    &= 
    \int 
    \:
    p_\q(\z^{k-1}_{\preceq b}\,, \z^{k-1}_{\succ b} \,\mid\, \x) 
    \: 
    dz^{k-1}_{\preceq b} 
    \prod_{v=1}^b p_\q(\z^k_v \mid \x, \z^{k}_{\prec v}, \z^{k-1}_{\succ v})
    \\
    &=
    p_\q(\z^{k-1}_{\succ b} \,\mid\, \x)
    \:
    p_\q(\z^k_{\preceq b} \mid \x, \z^{k-1}_{\succ b})
    \\
    &=
    p_\q(\z^k_{\preceq b}\,, \, \z^{k-1}_{\succ b} \mid \x)
    .
\end{align*}
When we require the APG proposal $q_\f(z'_b \,|\, x, z_{-b})$ leaves the posterior invariant (by minimizing the inclusive KL divergence relative to the conditional posterior $p_\q(z_b \,|\, x, \z_{-b})$), then any sweep or part of one sweep will also leave the posterior invariant, as what we prove above. This means that at test time we can apply arbitrary number of APG sweeps, each of which will results in samples that approximate the posterior $p_\q(z \,|\, x)$.

\section{Resampling Algorithm}

\label{appendix:resample-algo}
\begin{algorithm}[!h]
    \setstretch{1.2}
  \caption{Multinomial Resampler}
  \label{alg:resample}
\begin{algorithmic}[1]
  \State \textbf{Input} Weighted samples $\{z^l, w^l \}_{l=1}^L$
  \For {$i = 1$ \textbf{to} $L$}
    \State $a^i \sim \mathrm{Discrete}(\{w^l / \sum_{l' = 1}^L w^{l'}\}_{l=1}^L)$\Comment{Index Selection} 
    \State Set Set $ \tilde{z}^{\:i} = z^{a^i}$
    \State Set $ \tilde{w}^{\:i} = \frac{1}{L} \sum_{l = 1}^L w^l$ \Comment{Re-weigh}
    \EndFor
  \State \textbf{Output} Equally weighted samples $\{\tilde{z}^l, \tilde{w}^l \}_{l=1}^L$
\end{algorithmic}
\end{algorithm}

\section{Proof of the amortized population Gibbs samplers algorithm}
\label{appendix:proof-algo}

Here, we provide an alternative proof of correctness of the APG algorithm given in Algorithm~\ref{alg:amortized-gibbs}, based on the construction of proper weights~\cite{naesseth2015nested} which was introduced after SMC samplers~\cite{delmoral2006sequential}.
In section~\ref{appendix-proof-pw-begin-section}, we will introduce proper weights; In section~\ref{appendix-proof-pw-end-section}, we then present several operations that preserve the proper weighting property; In section~\ref{appendix-proof-apg-section}, we will take use of these properties to prove the correctness of APG samplers algorithm (Algorithm~\ref{alg:amortized-gibbs}).

\subsection{Proper weights}
\label{appendix-proof-pw-begin-section}
\begin{definition}[Proper weights]
    Given an unnormalized density $\tilde p(z)$, with corresponding normalizing constant $Z_p := \int \tilde p(z) \,\mathrm dz$ and normalized density $p \equiv \tilde p / Z_p$, the random variables $z, w \sim P(z, w)$ are properly weighted with respect to $\tilde p(z)$ if and only if for any measurable function $f$
    \begin{align}
    \label{eq:pw}
    \E_{P(z, w)}\left[w f(z)\right] = Z_p \E_{p(z)}[f(z)]. 
    \end{align}
    We will also denote this as
    \begin{align*}
        z, w \pw \tilde p.
    \end{align*}
\end{definition}

\paragraph{Using proper weights.}
Given independent samples $z^l, w^l \sim P$, we can estimate $Z_p$ by setting $f \equiv 1$:
\begin{align*}
    Z_p \approx \frac{1}{L} \sum_{l = 1}^L w^l.
\end{align*}
This estimator is unbiased because it is a Monte Carlo estimator of the left hand side of \eqref{eq:pw}.
We can also estimate $\E_{p(z)}[f(z)]$ as
\begin{align*}
    \E_{p(z)}[f(z)] \approx \frac{\frac{1}{L} \sum_{l = 1}^L w^l f(z^l)}{\frac{1}{L} \sum_{l = 1}^L w^l}.
\end{align*}
While the numerator and the denominator are unbiased estimators of $Z_p \E_{p(z)}[f(z)]$ and $Z_p$ respectively, their fraction is biased.
We often write this estimator as
\begin{align}
    \E_{p(z)}[f(z)] \approx \sum_{l = 1}^L \bar w^l f(z^l), \label{eq:pw-estimation}
\end{align}
where $\bar w^l := w^l / \sum_{l' = 1}^L w^{l'}$ is the normalized weight.

\subsection{Operations that preserve proper weights}
\label{appendix-proof-pw-end-section}

\begin{proposition}[Nested importance sampling]
  This is similar to Algorithm 1 in \cite{naesseth2015nested}.
    Given unnormalized densities $\tilde q(z), \tilde p(z)$ with the normalizing constants $Z_q, Z_p$ and normalized densities $q(z), p(z)$, if 
    \begin{align}
        z, w \pw \tilde q, \label{eq:1}
    \end{align}
    then
    \begin{align*}
        z, \frac{w\tilde p(z)}{\tilde q(z)} \pw \tilde p.
    \end{align*}
\end{proposition}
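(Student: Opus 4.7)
The plan is to work straight from the definition of proper weighting. To show that $(z, w\tilde p(z)/\tilde q(z))$ is properly weighted with respect to $\tilde p$, I need to verify that for every measurable test function $g$,
\begin{equation*}
\E_{P(z,w)}\!\left[\frac{w\,\tilde p(z)}{\tilde q(z)}\,g(z)\right] = Z_p\,\E_{p(z)}[g(z)].
\end{equation*}
The natural move is to recognize that the left-hand side has exactly the form of the hypothesis \eqref{eq:1} applied to a cleverly chosen test function. Concretely, I would define $f(z) := \frac{\tilde p(z)}{\tilde q(z)}\,g(z)$, which is measurable whenever $g$ is (assuming $\tilde q$ dominates $\tilde p$ so the ratio is well defined).

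Next, I would invoke the hypothesis $z, w \pw \tilde q$ on this $f$ to rewrite the expectation as $Z_q\,\E_{q(z)}[f(z)]$. Expanding $q \equiv \tilde q / Z_q$ and substituting back,
\begin{equation*}
Z_q\,\E_{q(z)}\!\left[\frac{\tilde p(z)}{\tilde q(z)}\,g(z)\right]
= Z_q \int \frac{\tilde q(z)}{Z_q}\,\frac{\tilde p(z)}{\tilde q(z)}\,g(z)\,dz
= \int \tilde p(z)\,g(z)\,dz,
\end{equation*}
so the $Z_q$ factors cancel and $\tilde q(z)$ divides out of the integrand, leaving the unnormalized expectation under $\tilde p$. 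Finally, I rewrite this as $Z_p\,\E_{p(z)}[g(z)]$ by dividing and multiplying by $Z_p$, which yields exactly the proper-weighting identity for $\tilde p$.

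Since $g$ was arbitrary, the conclusion follows. There is no real obstacle here beyond a measurability/dominance remark: the argument implicitly requires that $\tilde q(z) = 0 \Rightarrow \tilde p(z) = 0$ so that the weight $w\tilde p(z)/\tilde q(z)$ is almost surely defined under $P$. I would state this absolute-continuity assumption at the outset (it is the standard one for importance sampling to make sense) and then the proof is essentially a one-line substitution. The entire argument is three equalities long, so the write-up should emphasize the conceptual point that nested importance sampling is just a reweighting of a proper sample, rather than any intricate calculation.
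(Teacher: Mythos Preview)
Your proposal is correct and essentially identical to the paper's proof: both apply the proper-weighting hypothesis with the test function $\tfrac{\tilde p(z)}{\tilde q(z)}g(z)$, cancel $Z_q$ and $\tilde q$ in the resulting integral, and recognize the remainder as $Z_p\,\E_{p}[g]$. Your extra absolute-continuity remark is a sensible caveat the paper omits, but otherwise the arguments match line for line.
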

\begin{proof}
    First define the distribution of $z, w$ as $Q$.
    For measurable $f(z)$
    \begin{align*}
        \E_{Q(z, w)}\left[\frac{w\tilde p(z)}{\tilde q(z)} f(z)\right] 
        = Z_q \E_{q(z)}\left[\frac{\tilde p(z) f(z)}{\tilde q(z)}\right]
        = Z_q \int q(z) \frac{\tilde p(z) f(z)}{\tilde q(z)} \,\mathrm dz
        = \int \tilde p\textbf{}(z) f(z) \,\mathrm dz
        = Z_p \E_{p(z)}[f(z)].
    \end{align*}
\end{proof}

\begin{proposition}[Resampling]
\label{proposition:resampling}
  This is similar to Section 3.1 in \cite{naesseth2015nested}.
    Given an unnormalized density $\tilde p(z)$ (normalizing constant $Z_p$, normalized density $p(z)$), if we have a set of properly weighted samples
    \begin{align}
        z^l, w^l \pw \tilde p,  \quad l = 1,\ldots, L \label{eq:bla}
    \end{align}
    then the resampling operation preserves the proper weighting, i.e.
    \begin{align*}
        z'^{\:l}, w'^{\:l} \pw \tilde p, \quad l = 1,\ldots, L
    \end{align*}
    where $z'^{\:l} = z^{a}$ with probability $P(a = i) = w^i / \sum_{l=1}^L w^l$ and $w'^{\:l} := \frac{1}{L} \sum_{l = 1}^L w^l$.
\end{proposition}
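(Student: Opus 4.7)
The plan is to verify the definition of proper weighting directly for the resampled pair $(z'^{\:l}, w'^{\:l})$ by a simple tower-property calculation. Fix any measurable $f$ and any index $l \in \{1, \ldots, L\}$. I want to show
\[
\E\!\left[w'^{\:l}\, f(z'^{\:l})\right] = Z_p\, \E_{p(z)}[f(z)].
\]

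First I would condition on the full collection of incoming weighted particles $\mathcal{F} := \{(z^j, w^j)\}_{j=1}^L$. Given $\mathcal{F}$, the ancestor $a^l$ is discrete with $P(a^l = i \mid \mathcal{F}) = w^i / \sum_{j} w^j$, and the new weight $w'^{\:l} = \tfrac{1}{L}\sum_{j} w^j$ is $\mathcal{F}$-measurable. Therefore
\[
\E\!\left[w'^{\:l} f(z'^{\:l}) \,\middle|\, \mathcal{F}\right]
= \left(\tfrac{1}{L}\sum_{j=1}^L w^j\right) \sum_{i=1}^L \frac{w^i}{\sum_{j} w^j}\, f(z^i)
= \frac{1}{L}\sum_{i=1}^L w^i\, f(z^i).
\]
The crucial cancellation here is between the denominator of the ancestor-sampling probability and the $\sum_j w^j$ inside the new weight; this is what makes the estimator valid despite the overall weight being the empirical average.

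Next I would take outer expectation and invoke the hypothesis \eqref{eq:bla} that each pair $(z^i, w^i)$ is properly weighted with respect to $\tilde p$. Applying \eqref{eq:pw} once per term gives $\E[w^i f(z^i)] = Z_p\, \E_{p(z)}[f(z)]$ for every $i$, hence
\[
\E\!\left[w'^{\:l} f(z'^{\:l})\right]
= \frac{1}{L}\sum_{i=1}^L \E[w^i f(z^i)]
= \frac{1}{L}\cdot L \cdot Z_p\, \E_{p(z)}[f(z)]
= Z_p\, \E_{p(z)}[f(z)],
\]
which is exactly the proper-weighting condition for $(z'^{\:l}, w'^{\:l})$ with respect to $\tilde p$. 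Since $l$ was arbitrary, the claim holds for all $l$.

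There is no real obstacle here — the argument is essentially two lines of algebra. The only subtlety worth flagging is that the hypothesis only requires marginal proper weighting of each $(z^i, w^i)$ (no joint independence is invoked), and the conclusion is likewise a marginal statement about each $(z'^{\:l}, w'^{\:l})$ individually; the resampled particles are in general dependent (they share the common weight $\tfrac{1}{L}\sum_j w^j$ and draw ancestors from the same empirical distribution), but this does not interfere with the per-$l$ proper-weighting property that the proposition asserts.
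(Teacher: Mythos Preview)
Your proof is correct and follows essentially the same route as the paper: condition on the incoming particles, observe that the sum $\sum_j w^j$ cancels between the ancestor probabilities and the new weight to leave $\tfrac{1}{L}\sum_i w^i f(z^i)$, then take the outer expectation and apply the proper-weighting hypothesis termwise. Your remark that only \emph{marginal} proper weighting of each $(z^i,w^i)$ is required is a slight sharpening---the paper silently writes the joint as a product $\prod_l \hat P(z^l,w^l)$, but as you note linearity of expectation suffices and no independence is actually used.
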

\begin{proof}
    Define the distribution of $z^l, w^l$ as $\hat{P}$.
    We show that for any $f$, $\E[f(z^{a}) w'^{\:l}] = Z_p \E_{p(z)}[f(z)]$.
    \begin{align*}
    &
        \E_{\left(\prod_{l=1}^L \hat{P}(z^l, w^l)\right) p(a \mid w^{1:L})} \bigg[f(z^{a}) w'^{l}\bigg] \\
        &= \E_{\prod_{l=1}^L \hat{P}(z^l, w^l)}\left[\sum_{i = 1}^L f(z^i) w' \: P(a = i)\right]  \\
        &= \E_{\prod_{l=1}^L \hat{P}(z^l, w^{l})}\left[\sum_{i = 1}^L f(z^i) w'  \frac{w^i}{ \sum_{l'=1}^L w^{l'}}\right]  \\
        &= \E_{\prod_{l=1}^L \hat{P}(z^l, w^l)}\left[\frac{1}{L}\sum_{i = 1}^L f(z^i) w^i\right] \\
        &= \frac{1}{L}\sum_{i = 1}^L \E_{\hat{P}(z^i, w^i)}\left[f(z^i) w^i\right]
        = \frac{1}{L}\sum_{i = 1}^L Z_p \E_{p(z)}[f(z)]
        = Z_p \: \E_{p(z)}[f(z)]. 
    \end{align*}
\end{proof}
Therefore, the resampling will return a new set of samples that are still properly weighted relative to the target distribution in the APG sampler (Algorithm~\ref{alg:amortized-gibbs}).
\begin{proposition}[Move]
\label{proposition:extendedspace}
    Given an unnormalized density $\tilde p(z)$ (normalizing constant $Z_p$, normalized density $p(z)$) and normalized conditional densities $q(z' \given z)$ and $r(z \given z')$, the proper weighting is preserved if we apply the transition kernel to a properly weighted sample, i.e.~if we have
    \begin{align}
        &
        z^l, w^l \pw \tilde p, \label{eq:pw-of-p}\\[1em]
        &
         z'^{\:l} \sim q(z'^{\:l} \given z^l),  \label{eq:z-prime}\\[1em]
        &
        w'^{\:l} = \frac{\tilde p(z'^{\:l})r(z^l \given z'^{\:l})}{\tilde p(z^l)q(z'^{\:l} \given z^l)} w^l, \qquad l = 1, \ldots, L \label{eq:w-prime}
    \end{align}

then we have
    \begin{align}
        z'^{\:l}, w'^{\:l} \pw \tilde p, \qquad  l = 1, \ldots, L \label{eq:to-prove}
    \end{align}
\end{proposition}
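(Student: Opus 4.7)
The plan is to derive the Move proposition from two proper-weighting operations that are already available in the appendix: auxiliary-variable augmentation (a special case of the Nested Importance Sampling proposition applied to the proposal kernel) and Nested Importance Sampling itself. The central construction is to work on the extended product space of pairs $(z, z')$ with the unnormalized target $\tilde\pi(z, z') := \tilde p(z')\, r(z \given z')$. Because $r(\cdot \given z')$ is a normalized conditional density, $\int \tilde\pi(z, z')\, \mathrm dz = \tilde p(z')$, so $\tilde\pi$ has the same normalizing constant $Z_p$ as $\tilde p$, and its $z'$-marginal is exactly $p$. This single structural identity is what makes the extended-space argument telescope correctly at the end.

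First I would show that drawing $z'^{\:l} \sim q(z'^{\:l} \given z^l)$ as in \eqref{eq:z-prime} produces pairs $(z^l, z'^{\:l})$ that are properly weighted by the original weight $w^l$ with respect to the extended proposal $\tilde q(z, z') := \tilde p(z)\, q(z' \given z)$, whose normalizing constant is also $Z_p$. Given a measurable $g(z, z')$, the inner integral $\bar g(z) := \int q(z' \given z)\, g(z, z')\, \mathrm dz'$ is a measurable function of $z$, so applying \eqref{eq:pw-of-p} to $\bar g$ yields $\E[w^l g(z^l, z'^{\:l})] = \E[w^l \bar g(z^l)] = \int \tilde p(z)\, q(z' \given z)\, g(z, z')\, \mathrm dz\, \mathrm dz'$, which is exactly the proper-weighting identity for $\tilde q$.

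Next I would invoke the Nested Importance Sampling proposition to reweight from $\tilde q$ to $\tilde\pi$. The density ratio $\tilde\pi(z, z') / \tilde q(z, z') = \tilde p(z')\, r(z \given z') / \bigl(\tilde p(z)\, q(z' \given z)\bigr)$ is precisely the factor by which hypothesis \eqref{eq:w-prime} multiplies $w^l$, so $(z^l, z'^{\:l}),\, w'^{\:l} \pw \tilde\pi$. To conclude \eqref{eq:to-prove}, I test proper weighting against an arbitrary measurable $f$ depending only on $z'$: viewing $f$ as a function on the extended space, the identity for $\tilde\pi$ gives $\E[w'^{\:l} f(z'^{\:l})] = Z_p\, \E_{\pi}[f(z')]$, and the marginalization observation from the first paragraph reduces this to $Z_p\, \E_{p}[f(z')]$, which is the desired identity for $\tilde p$.

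The main subtlety, rather than a genuine obstacle, is the very first step: one must use that $r(\cdot \given z')$ is a \emph{properly normalized} conditional density, because otherwise integrating out $z$ from $\tilde\pi$ would not recover $\tilde p$, the extended-space normalizer would differ from $Z_p$, and the final marginalization would pick up a spurious factor. Once that is explicit, the proof is a clean two-line composition of results already established in the appendix, requiring no additional Monte Carlo or change-of-measure computation.
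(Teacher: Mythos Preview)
Your argument is correct. The paper takes a more direct route: it substitutes the definition of $w'$ into $\E_{\hat P(z,w)\,q(z'\mid z)}[w' f(z')]$, cancels $q(z'\mid z)$, factors the integral as $\int \tilde p(z') f(z')\bigl(\int \hat P(z,w)\,w\,r(z\mid z')/\tilde p(z)\,\mathrm dz\,\mathrm dw\bigr)\mathrm dz'$, and then applies the proper-weighting hypothesis to the inner integral to get $Z_p\,\E_{p(z)}[r(z\mid z')/\tilde p(z)] = 1$. Your approach instead works on the extended space $(z,z')$: you first show that augmenting with $z'\sim q(\cdot\mid z)$ yields pairs properly weighted for $\tilde q(z,z')=\tilde p(z)\,q(z'\mid z)$, then invoke the Nested Importance Sampling proposition to reweight to $\tilde\pi(z,z')=\tilde p(z')\,r(z\mid z')$, and finally marginalize out $z$. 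The two proofs hinge on the same fact---that $r(\cdot\mid z')$ is normalized so the extended target has normalizer $Z_p$---but your decomposition makes the extended-space structure of SMC samplers explicit and reuses Proposition~1 rather than redoing the change-of-measure by hand; the paper's version is shorter and self-contained. One small quibble: your parenthetical describing the augmentation step as ``a special case of the Nested Importance Sampling proposition'' is not quite right, since augmentation enlarges the sample space rather than reweighting on a fixed one; fortunately you supply the correct direct $\bar g$ argument anyway.
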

\begin{proof}
    Firstly we simplify the notation by dropping the superscript $l$ without loss of generality. Define the distribution of $z, w$ as $\hat{P}$.
    Then, due to \eqref{eq:pw-of-p}, for any measurable $f(z)$, we have
    \begin{align*}
        \E_P[w f(z)] = Z_p E_{p}[f(z)].
    \end{align*}
    To prove \eqref{eq:to-prove}, we show $\E_{\hat{P}(z, w)q(z' \given z)}[w' f(z')] = Z_p \E_{p(z')}[f(z')]$ for any $f$ as follows:
    \begin{align}
        \E_{\hat{P}(z, w)q(z' \given z)}[w' f(z')]
        &= \E_{\hat{P}(z, w)q(z' \given z)}\left[\frac{\tilde p(z')r(z \given z')}{\tilde p(z)q(z' \given z)} w f(z')\right] 
        \nonumber\\
        &= \int \hat{P}(z, w)q(z' \given z) \frac{\tilde p(z')r(z \given z')}{\tilde p(z)q(z' \given z)} w f(z') \,\mathrm dz \,\mathrm dw \,\mathrm dz' 
        \nonumber\\
        &= \int \hat{P}(z, w) \frac{\tilde p(z')r(z \given z')}{\tilde p(z)} w f(z') \,\mathrm dz \,\mathrm dw \,\mathrm dz' 
        \nonumber\\
        &= \int \tilde p(z') f(z') \left(\int \hat{P}(z, w) w \frac{r(z \given z')}{\tilde p(z)}\,\mathrm dz \,\mathrm dw\right) \,\mathrm dz'
        \nonumber\\
        &= \int \tilde p(z') f(z') Z_p \E_{p(z)}\left[\frac{r(z \given z')}{\tilde p(z)}\right] \,\mathrm dz'. \label{eq:pause}
    \end{align}
    Using the fact that $\E_{p(z)}\left[\frac{r(z \given z')}{\tilde p(z)}\right] = \int p(z) \frac{r(z \given z')}{\tilde p(z)} \,\mathrm dz = \int r(z \given z') \,\mathrm dz / Z_p = 1 / Z_p$.
    Equation~\ref{eq:pause} simplifies to
    \begin{align*}
        \int \tilde p(z') f(z') \,\mathrm dz' = Z_p \E_{p(z')}[f(z')].
    \end{align*}
\end{proof}

\subsection{Correctness of APG Sampler}
\label{appendix-proof-apg-section}
We prove the correctness of the APG sampler (Algorithm~\ref{alg:amortized-gibbs}) by induction. We firstly prove the correctness of the initial proposing step ($k=1$, line~\ref{line:rws-loop} - line~\ref{line:rws-grad-theta}); Then we prove that the algorithm is still correct when we perform one Gibbs sweep step ($k=2$, line~\ref{line:apg-sweep-begin} - line~\ref{line:apg-sweep-end}), given that the previous step is already proved to be correct. By induction we can conclude that its correctness still holds if we perform more Gibbs sweeps.

\textbf{Step $k=1$}. We initialize the proposal of all the blocks $z:=z_{1:B}$ from an initial encoder $z \sim q_\phi(z \given x)$ (line~\ref{line:rws-propose}), which is trained using the wake-$\phi$ phase objective in the standard reweighted wake-sleep\cite{le2019revisiting}, where the objective is 
\begin{align*}
    \E_{\hat{p}(x)}\left[\textsc{kl}\left(p_\theta(z \given x) || q_\phi(z \given x)\right)\right].
\end{align*}
We take gradient w.r.t. variational parameter $\phi$ and compute a self-normalized gradient estimate (line~\ref{line:rws-grad-phi}) as
\begin{align}
    \label{eq:g-rws-phi}
    g_\f :&=
    - \nabla_\phi \: 
    \E_{\hat{p}(x)}
    \left[
    \textsc{kl}\left( p_\theta(z \mid x) \, || \, q_\phi(z \mid  x) \right)
    \right] 
    \\
    &
    =
    \E_{\hat{p}(x)}\left[
    \E_{p_\theta(z \given x)}\left[\nabla_\phi \log q_\phi(z \mid x)\right]
    \right]
    \\
    &
    =
    \sum_{l = 1}^L \frac{w^{l}}{\sum_{l' = 1}^L w^{l'}} \nabla_\f \log q_\phi(z^{l} \mid x)
    ,
    \quad
    z^l \sim q_\f (z \mid x)
    ,
    \quad
    w^{l} = \frac{p_\theta(x, z^{l})}{q_\phi(z^{l} \mid x)}    
    . 
\end{align}
 
Equation~\ref{eq:pw-estimation} will guarantee the validity of this gradient estimate $g_\f$, as long as we show that samples are properly weighted
\begin{align}
    z^l, w^l \pw p_\theta(z, x), \qquad l = 1, \ldots, L.
    \label{eq:invariant}
\end{align}

In fact, $\{(w^l, z^l)\}_{l=1}^L$ are properly weighted because $z^l$ are proposed using importance sampling~\citet{naesseth2015nested}, where $q_\phi(z \given x)$ is the proposal density and $p_\theta(z^l, x)$ is the unnormalized target density. Note that the resampling step (line~\ref{line:resample}) will preserve the proper weighting because of Proposition~\ref{proposition:resampling}.

\textbf{Step $k=2$}. Now we iteratively update each block of the variable $z_b$ for $b = 1, 2, ..., B$, using the corresponding conditional proposal $q_\f (z_b \mid x, z_{-b})$, which is trained by the objective 
\begin{align*}
    \mathcal{K}_b(\f)
    := \,
    \E_{\hat{p}(x)p_\q(z_{-b} | x)}
    \biggl[
        \textsc{KL}\left(
            p_\q(z_{b} \mid x, z_{-b})
            \,||\,
            q_\f(z_{b} \mid x, z_{-b})
        \right)
    \biggr]
    ,
    \quad
    b = 1, 2, ..., B
    .
\end{align*}
We take gradient w.r.t~$\f$ as
\begin{align}
    \label{eq:g-phi-b}
    g_\phi^b 
    :&= - \nabla_\phi \E_{p(x)}\left[ \E_{p_\theta(z_{-b} \given x)}\left[\textsc{kl}\left(p_\theta(z_b \given z_{-b}, x) || q_\phi(z_b \given z_{-b}, x)\right)\right]
    \right]
    \\
    &= 
    \E_{p(x)}\left[
    \E_{p_\theta(z_{1:B} \given x)}\left[\nabla_\phi \log q_\phi(z_b \given z_{-b}, x)\right]
    \right]
    , \quad
    b = 1, 2, ..., B
    . 
\end{align}
We compute a self-normalized gradient estimate (line~\ref{line:apg-grad-phi}) in a propose-weigh-reassign manner (line~\ref{line:apg-propose}, line~\ref{line:apg-weight}, line~\ref{line:apg-reassign}).

We will validate this gradient estimate (line~\ref{line:apg-grad-phi}) using the proper weighting again, i.e. we want to prove that 
\begin{align}
    z_{1:B}^l, w^l \pw p_\theta(z_{1:B}, x), \qquad l = 1, \ldots, L.
    \label{eq:invariant}
\end{align}
so that Equation~\ref{eq:pw-estimation} will guarantee the validity of this gradient estimate.

To prove that one Gibbs sweep (line~\ref{line:apg-sweep-begin} - line~\ref{line:apg-sweep-end}) also preserves proper weighting, we will show that each block update satisfies all the 3 conditions (Equation~\ref{eq:pw-of-p}, ~\ref{eq:w-prime} and ~\ref{eq:invariant}) in Proposition~\ref{proposition:extendedspace}, by which we can conclude the samples are still properly weighted after each block update. 

Without loss of generality, we drop all $l$ superscripts in the rest of the proof. Before any block update (before line~\ref{line:apg-propose}),  we already know that samples are properly weighted, i.e.
\begin{align}
    z, w \pw p_\theta(z, x).
\end{align}
which corresponds to Equation~\ref{eq:pw-of-p}. Next we define a conditional distribution $q(z' \mid z):= q_\phi(z_b' \given x, z_{-b}) \delta_{z_{-b}}(z_{-b}')$, from which we propose a new sample
\begin{align}
    z' \sim q_\phi(z_b' \given x, z_{-b}) \delta_{z_{-b}}(z_{-b}'),
\end{align}
where the density of $z_{-b}'$ is a delta mass on $z_{-b}$ defined as $\delta_{z_{-b}}(z_{-b}') = 1$ if $z_{-b} = z_{-b}'$ and $0$ otherwise.
In fact, this form of sampling step is equivalent to firstly sample $z_b' \sim q_\phi(z_b \mid x, z_{-b})$ (line~\ref{line:apg-propose}) and let $z_{-b}' = z_{-b}$ (line~\ref{line:apg-reassign}), which is exactly what the APG sampler assumes procedurally in Algorithm~\ref{alg:amortized-gibbs}. This condition corresponds to Equation~\ref{eq:z-prime}.

Finally, we define the weight $w'$
\begin{align}
    w' = \frac{{\color{blue}p_\theta(x, z_b', z_{-b}')} {\color{red}r(z_b \given x, z_{-b}) \delta_{z_{-b}}(z_{-b})}}{{\color{red}p_\theta(x, z_b, z_{-b})} {\color{blue}q_\phi(z_b' \given x, z_{-b}) \delta_{z_{-b}}(z_{-b}')}} w,
\end{align}
where the terms in blue are treated as densities (normalized or unnormalized) of $z_{1:B}'$ and the terms in red are treated as densities of $z_{1:B}$.
Since both delta mass densities evaluate to one, this weight is equal to the weight computed after each block update (line~\ref{line:apg-weight}). This condition corresponds to Equation~\ref{eq:w-prime}.

Now we can apply the conclusion \eqref{eq:to-prove} in Proposition~\ref{proposition:extendedspace} and claim 
\begin{align*}
z_{1:B}', w' \pw p_\theta(z_{1:B}', x)
.
\end{align*}
since $z_{-b} = z_{-b}'$ and $z_b = z_b'$ due to the re-assignment (line~\ref{line:apg-reassign}). Note that the resampling step (line~\ref{line:resample}) will preserve the proper weighting because of Proposition~\ref{proposition:resampling}.

Based on the fact that proper weighting is preserved at both the initial proposing step $k=1$ and one Gibbs sweep $k=2$, we have proved that both gradient estimates (line~\ref{line:rws-grad-phi} and line~\ref{line:apg-grad-phi}) are correct.

\section{Architectures of Amortized Population Gibbs samplers using Neural Sufficient Statistics}
\label{appendix:architecture}
Based on our proposed parameterization in terms of neural sufficient statistics (see section~\ref{sec:neural-sufficient-statistics}), we will explain how we design the approximate Gibbs (neural) proposals in the experiments in section~\ref{sec:experiments}.

In general, we consider a structured model $p_\q(x, z)$ where we can partition the latent variables $z = \{z^\textsc{g}, z^\textsc{l}\}$ into global variables $z^\textsc{g}$ and local variables $z^\textsc{l}$. The dimensionality of global variables is typically constant, whereas local variables $\z^\textsc{l} = \{\z^\textsc{l}_1, \ldots, \z^\textsc{l}_N\}$ have a dimensionality that increases with the instance size $N$. For models with this structure, the local variables are typically conditionally independent 
\begin{align}
z^\textsc{L}_n \bot z^\textsc{L}_{-n} \mid x, z^\textsc{g}.    
\end{align}
We assume that the priors $p(z^\textsc{g}; \lambda^\textsc{g})$ and $p(z^\textsc{l}; \lambda^\textsc{l})$ are in the exponential family form, where $\lambda^\textsc{g}$ and $\lambda^\textsc{l}$ are natural parameters of the corresponding distributions. By the conditional independencies, we parameterize the conditional neural proposals (i.e.~variational distributions) using neural sufficient statistics $T_\f(\cdot)$ as
\begin{align}
    \label{eq:appendix-neural-sufficient-statistics}
    \tilde{\lambda}^\textsc{g}
    &= 
    \lambda^\textsc{g} 
    + 
    \sum_{n=1}^N 
    T^\textsc{g}_\f(\x_n, z^\textsc{l}_{n}),
    &
    \tilde{\lambda}^\textsc{l}_n
    &=
    \lambda^\textsc{l}_n + T^\textsc{l}_\f(\x_n, \z^\textsc{g}).
\end{align}
where $\tilde{\lambda}^\textsc{g}$ and $\tilde{\lambda}^\textsc{l}$ are natural parameters of proposals of the global variables $z^\textsc{g}$ and local variables $z^\textsc{l}$ respectively. 

In the Gaussian mixture model we know the analytic forms of conditional (conjugate) posteriors. This means that we have analytic expressions for true sufficient statistics in equation~\ref{eq:appendix-neural-sufficient-statistics}. Here, the APG sampler learns neural sufficient statistics that approximate the true statistics. In the deep generative mixture model and the time series model in bouncing MNIST, we no longer know the analytic forms of the conditionals. As a result, the APG samplers for these two models will employ neural networks $f_\f^\textsc{g}$ and $f_\f^\textsc{l}$ that take the (learned) neural sufficient statistics and the parameters of the priors as input, and predict the parameters of the proposals as output, i.e.
\begin{align*}
    \label{eq:appendix-neural-sufficient-statistics}
    \tilde{\lambda}^\textsc{g}
    &= 
    \lambda^\textsc{g} 
    + 
    \sum_{n=1}^N 
    T^\textsc{g}_\f(\x_n, z^\textsc{l}_{n})
    \approx
    f_\f^\textsc{g}(\lambda^\textsc{g}, \, \sum_{n=1}^N 
    T^\textsc{g}_\f(\x_n, z^\textsc{l}_{n}))
    ,
    &
    \tilde{\lambda}^\textsc{l}_n
    &=
    \lambda^\textsc{l}_n + T^\textsc{l}_\f(\x_n, \z^\textsc{g})
    \approx
    f_\f^\textsc{l}(\lambda^\textsc{l}_n, \,
    T^\textsc{l}_\f(\x_n, z^\textsc{g}))    
    .
\end{align*}
Since there is always a deterministic transformation between a natural parameter and the corresponding distribution parameters (i.e. the parameters in canonical form), we can always convert any exponential family to a canonical form. For convenience, our networks output the canonical parameters directly, rather than returning natural parameters that then need to be converted to canonical form.

\subsection{Gaussian Mixture Model}
In the APG sampler for the Gaussian mixture model (GMM), we employ neural proposals of the form 
\begin{align}
    q_\f(\mu_{1:M}, \tau_{1:M} \mid x_{1:N}) 
    &=
    \prod_{m=1}^M
    \text{NormalGamma}
    \Big(
        \mu_{m}, \tau_{m} 
        \:\Big\vert\:
        \tilde{\alpha}_m,
        \tilde{\beta}_m,
        \tilde{\mu}_m,
        \tilde{\nu}_m  
    \Big),
    \\
    q_\f(\mu_{1:M}, \tau_{1:M} \mid x_{1:N}, c_{1:N}) 
    &=
    \prod_{m=1}^M
    \text{NormalGamma}
    \Big(
        \mu_m, \tau_m 
        \:\Big\vert\:
        \tilde{\alpha}_m,
        \tilde{\beta}_m,
        \tilde{\mu}_m,
        \tilde{\nu}_m   
    \Big)
    ,
    \\
    q_\f(c_{1:N} \mid x_{1:N}, \mu_{1:M}, \tau_{1:M})
    &=
    \prod_{n=1}^N
    \text{Categorical}
    \Big(
    c_n
    \:\Big\vert\:
    \tilde{\pi}_n
    \Big).
\end{align}
where $M=3$ is the number of clusters in a GMM. We use the tilde symbol $\,\tilde{}\,$ to denote the parameters of the conditional neural proposals (i.e. approximate Gibbs proposals). The NormalGamma on the vector-valued mean $\mu_{m} \in\mathbb{R}^2$ and diagonal precision $\tau_{m} \in\mathbb{R}^2_{+}$ follows the standard definition
\begin{align}
    \tau_m &\sim \text{Gamma}(\alpha_0, \beta_0), 
    &
    \mu_m &\sim \text{Normal}(\mu_0, 1 / (\nu_0 \tau)).
\end{align}
where $\mu_0 = 0, \nu_0 = 0.1, \alpha_0 = 0.2, \beta_0 = 0.2$. The natural parameters $\lambda^\textsc{g} := (\lambda_1^\textsc{g}, \lambda_2^\textsc{g}, \lambda_3^\textsc{g}, \lambda_4^\textsc{g})$ of this distribution are defined in terms of the canonical parameters as 
\begin{align}
    \lambda_1^\textsc{g} &= \alpha_0 - \frac{1}{2},
    &
    \lambda_2^\textsc{g} &= - \beta_0 - \frac{\nu_0 \mu_0^2}{2}, 
    &
    \lambda_3^\textsc{g} &= \nu_0 \mu_0,
    &
    \lambda_4^\textsc{g} &= - \frac{\nu_0}{2}.
\end{align}
We employ neural sufficient statistics that approximate the true pointwise sufficient statistics
\begin{align*}
    \Big\{\mathrm{I}[c_n \!=\! m], 
        ~\mathrm{I}[c_n \!=\! m] \, x_n, 
        ~\mathrm{I}[c_n \!=\! m] \, x_n^2 
        ~\Big\vert~m \!=\! 1,2,\dots,M 
    \Big\}
\end{align*}

We use fully-connected networks for the statistics $T^\textsc{g}_\phi(x_n)$ of the initial proposal $q_\f(\mu_{1:M}, \tau_{1:M} \mid x_{1:N})$ and the statistics $T^\textsc{g}_\phi(x_n,c_n)$ for the conditional $q_\f(\mu_{1:M}, \tau_{1:M} \mid x_{1:N}, c_{1:N})$,
\begin{center}
    \begin{tabular}{c|c}
    \toprule
    \multicolumn{2}{c}{$T^\textsc{g}_\phi(x_n), \: x_n\in\mathbb{R}^2$}
    \\
    \midrule
    \parbox{3cm}{\centering FC. 2. $(s_n)$}
    & \parbox{3cm}{\centering FC. 3. Softmax. $(t_n)$}
    \\
    \bottomrule
    \end{tabular}
    \hspace{4em}
    \begin{tabular}{c|c}
    \toprule
    \multicolumn{2}{c}{$T^\textsc{g}_\phi(x_n,c_n), \: x_n\in\mathbb{R}^2, c_n\in\{0, 1\}^3$}
    \\
    \midrule
    \multicolumn{2}{c}{Concatenate$[x_n, \: c_n]$}
    \\
    \hline
     \parbox{3cm}{\centering FC. 2. $(s_n)$}
    & \parbox{3cm}{\centering FC. 3. Softmax. $(t_n)$}
    \\
    \bottomrule
    \end{tabular}
    \label{arch-gmm-global}
\end{center}
The output of each network consists of two elements $s_n$ and $t_n$. $s_n$ approximates the variable $x_n$; $t_{n, m}$ approximates the identity function $\mathrm{I}[c_n \!=\! m]$. 
Then we sum over all the points and compute the parameters of the conjugate posterior in analytic forms

\begin{align}
    N_m &= \sum_{n=1}^N t_{n, m},
    \hspace{1em}
    \tilde{x}_m = \sum_{n=1}^N t_{n, m} \cdot s_n,
    \hspace{1em}
    \tilde{x}_m^2 = \sum_{n=1}^N t_{n, m} \cdot s_n^2,
    \label{eq:appendix-gmm-aggregation-nss}
    \\
    \tilde{\alpha}_m &= \alpha_0 + \frac{N_m}{2},
    \\
    \tilde{\beta}_m &= \beta_0 + \frac{1}{2} \left(
    \tilde{x}_m^2 
    -  
    \frac{(\tilde{x}_m)^2}{N_m} 
    \right)
    +
    \frac{N_m \nu_0}{N_m + \nu_0} 
    \frac{(\frac{\tilde{x}_m}{N_m} - \mu_0)^2}{2}
    ,
    \\
    \tilde{\mu}_m &= \frac{\mu_0 \nu_0 + \tilde{x}_m}{\nu_0 + N_m},
    \\
    \tilde{\nu}_m &= \nu_0 + N_m
    .
\end{align}

We assume a Categorical prior on the assignment of each point $c_{n}$ of the form
\begin{align}
    c_n \sim \text{Categorical}(\pi)
\end{align}
where $\pi = (\frac{1}{3}, \frac{1}{3}, \frac{1}{3})$. The natural parameter is
\begin{align}
    \lambda^\textsc{l} = \log \pi
\end{align}
We employ a vector of neural statistics
\begin{align}
T^\textsc{l}_\phi(x_n, \mu_{1:M}, \tau_{1:M}) 
:=  
\big(
    T^\textsc{l}_\phi(x_n, \mu_1, \tau_1), 
    \:
    T^\textsc{l}_\phi(x_n, \mu_2, \tau_2), 
    \:
    \dots, 
    \:
    T^\textsc{l}_\phi(x_n, \mu_M, \tau_M)
\big),
\end{align}
where each element is parameterized by the network
\begin{table}[!h]
    \centering
    \begin{tabular}{c}
    \toprule
        $T^\textsc{l}_\phi(x_n, \mu_m, \tau_m), \: x_n\in\mathbb{R}^2, \: \mu_m\in\mathbb{R}^2, \: \tau_m\in\mathbb{R}^2_{+}$
        \\
    \midrule
    Concatenate$[x_n\, \: \mu_m, \: \tau_m]$\\
    \hline
    \parbox{4cm}{\centering FC. 32. Tanh. FC. 1.}\\
    \bottomrule
    \end{tabular}
    \label{arch-gmm-local}
\end{table}

We add these statistics to the natural parameters. The canonical parameters of the approximate posterior $\tilde{\pi}$ are then simply the Softmax normalization of the resulting sum
\begin{align}
    \tilde{\pi}_n
    =
    \text{Softmax}
    \Big(
    \log \pi + T^\textsc{l}_{\phi}(x_n, \, \mu_{1:M}, \, \tau_{1:M})
    \Big).
\end{align}

\subsection{Deep Generative Mixture Model}
The APG sampler in the deep generative mixture model (DMM) employs neural proposals of the form
\begin{align}
    q_\f(\mu_{1:M} \mid x_{1:N}) &= \prod_{m=1}^M \text{Normal} \Big( \mu_m \:\Big\vert\: \tilde{\mu}_m, \tilde{\sigma}^2_m I \Big)
    ,
    \\
    q_\f(\mu_{1:M} \mid x_{1:N}, c_{1:N}, h_{1:N}) &= \prod_{m=1}^M \text{Normal} \Big( \mu_m \:\Big\vert\: \tilde{\mu}_m, \tilde{\sigma}_m^2 I \Big)
    ,
    \\
    q_\f(c_{1:N}, h_{1:N} \mid x_{1:N}, \mu_{1:M}) &= q_\f(c_{1:N} \mid x_{1:N}, \mu_{1:M}) \: q_\f(h_{1:N} \mid c_{1:N}, , x_{1:N}, \mu_{1:M}) 
    \\
    &= \prod_{n=1}^N \text{Categorical} \Big(c_n \:\Big\vert\: \tilde{\pi}_n \Big) \: \text{Beta} \Big( h_n \:\Big\vert\: \tilde{\alpha}_n, \tilde{\beta}_n \Big)
    .
\end{align}
where $M=4$ is the number of clusters in a GMM. We use the tilde symbol $\,\tilde{}\,$ to denote the parameters of the conditional neural proposals (i.e. approximate Gibbs proposals).

\newpage
We assume a Gaussian prior on mean of each cluster $\mu_m$ of the form
\begin{align}
    \mu_m \sim \text{Normal} (\mu, \sigma^2_0 I)
\end{align}
where the hyper-parameters are $\mu = 0$ and $\sigma_0 = 10$. To compute the parameters of neural proposals, we firstly employ some MLPs that predict neural sufficient statistics $T_\f^\textsc{g}(x_n)$ and $T_\f^\textsc{g}(x_n, c_n, h_n)$ as

\begin{table}[!h]
    \centering
    \begin{tabular}{c|c}
    \toprule
        \multicolumn{2}{c}{$T_\f^\textsc{g}(x_n), \: x_n\in\mathbb{R}^2$}  \\
    \midrule
     \parbox{6cm}{\centering FC. 32. Tanh. FC. 8. $(s_n)$}
    & 
    \parbox{6cm}{\centering FC. 32. Tanh. FC. 4. Softmax. $(t_n)$}\\
    \bottomrule
    \end{tabular}
    \label{arch-dgmm-rws}
\end{table}

\begin{table}[!h]
    \centering
    \begin{tabular}{c|c}
    \toprule
        \multicolumn{2}{c}{$T^\textsc{g}_\f(x_n, c_n, h_n), \: x_n\in\mathbb{R}^2, \, c_n\in\{0, 1\}^4, \, h_n\in[0, 1]^1$} \\
    \midrule
    \multicolumn{2}{c}{Concatenate$[x_n, \: c_n, \: h_n]$} \\
    \hline 
     \parbox{6cm}{\centering FC. 32. Tanh. FC. 8. $(s_n)$}
    & 
    \parbox{6cm}{\centering FC. 32. Tanh. FC. 4. Softmax. $(t_n)$}\\
    \bottomrule
    \end{tabular}
    \label{arch-dgmm-global}
\end{table}

The architectures here are similar to those in the GMM in the sense that output of each network also consists of two features: $s_n$ and $t_n$. The intuition is that the we can extract useful features in the same way, but without conjugacy relationship. Then we compute the outer product of these two features, resulting in a weighted average like the ones in equation~\ref{eq:appendix-gmm-aggregation-nss} as
\begin{align}
    s_n \tens{} t_n := \begin{bmatrix}
                        s_{n, 1} t_{n, 1} & s_{n, 1} t_{n, 2} & s_{n, 1} t_{n, 3} & s_{n, 1} t_{n, 4}\\
                        s_{n, 2} t_{n, 1} & s_{n, 2} t_{n, 2} & s_{n, 2} t_{n, 3} & s_{n, 2} t_{n, 4}\\
                        \cdots \\
                        s_{n, 8} t_{n, 1} & s_{n, 8} t_{n, 2} & s_{n, 8} t_{n, 3} & s_{n, 8} t_{n, 4}
                        \end{bmatrix}
\end{align} 
We aggregate this outer products over all the points by taking an elementwise sum $\sum_{n=1}^N s_n \tens{} t_n$. Then we normalize the aggregation by taking an elementwise division with the sum $\sum_{n=1}^N t_n$, i.e.
\begin{align}
    \tilde{T}_\f^\textsc{g} := \frac{\sum_{n=1}^N s_n \tens{} t_n} {\sum_{n=1}^N t_n}
\end{align}
where we call the normalized aggregation $\tilde{T}_\f^\textsc{g} \in \mathbb{R}^{8\times 4}$, where its second dimension corresponds to the number of cluster $M=4$. Next we employ MLPs $f_\f^\textsc{g}(\cdot)$ to predict variational distribution of each cluster given the aggregated neural sufficient statistics and the parameters of the priors as
\begin{align}
    &
    \tilde{\mu}_m, \tilde{\sigma}_m^2 \xleftarrow{} f_\f^\textsc{g} \Big( \tilde{T}_\f^\textsc{g}(x_n)[:, m], \: \mu, \: \sigma_0 \Big),
    &
    \tilde{\mu}_m, \tilde{\sigma}_m^{2} \xleftarrow{} f_\f^\textsc{g} \Big( \tilde{T}_\f^\textsc{g}(x_n, c_n, h_n)[:, m], \: \mu, \: \sigma_0 \Big).
\end{align}
where $m=1, 2, ..., M$. The notation $[:, m]$ mean we take the m-th slice along the second dimension which corresponds the m-th cluster.

We parameterize the $f_\f^\textsc{g}(\cdot)$ using two separate MLPs, each of which is concatenated with the corresponding pointwise neural sufficient statistics networks, i.e. $T_\f^\textsc{g}(x_n)$ and $T_\f^\textsc{g}(x_n, c_n, h_n)$ respectively
\begin{center}
    \begin{tabular}{c}
    \toprule
    $f_\f^\textsc{g} \Big( \tilde{T}^\textsc{g}_\f(x_n)[:, m], \: \mu, \: \sigma_0 \Big), \: \tilde{T}^\textsc{g}_\f(x_n)[:, m] \in\mathbb{R}^8, \,  \mu \in \mathbb{R}^2, \, \sigma_0 \in\mathbb{R}^2$ \\
    \midrule
    Concatenate$[\tilde{T}^\textsc{g}_\f(x_n)[:, m], \: \mu, \: \sigma_0]$\\
    \hline
    FC. $2\times 32$. Tanh. FC. $2\times 2$\\
    \bottomrule
    \end{tabular}
    \hspace{2em}
    \begin{tabular}{c}
    \toprule
    $f_\f^\textsc{g} \Big( \tilde{T}^\textsc{g}_\f(x_n, c_n, h_n)[:, m], \: \mu, \: \sigma_0 \Big), \: \tilde{T}^\textsc{g}_\f(x_n, c_n , h_n)[:, m] \in\mathbb{R}^8, \: \mu \in \mathbb{R}^2, \: \sigma_0 \in\mathbb{R}^2$ \\
    \midrule
    Concatenate$[\tilde{T}^\textsc{g}_\f(x_n, c_n , h_n)[:, m], \: \mu, \: \sigma_0]$\\
    \hline
    FC. $2\times 32$. Tanh. FC. $2\times 2$\\
    \bottomrule
    \end{tabular}
\end{center}
We assume a Categorical on mixture assignments $c_{n}$ and a Beta prior on the embedding of each point $h_n$,
\begin{align}
    &
    c_n \sim \text{Categorical}(\pi),
    &
    h_n \sim \text{Beta}(\alpha, \beta).
\end{align}
where $\pi = (\frac{1}{4}, \frac{1}{4},  \frac{1}{4}, \frac{1}{4}), \alpha = 1$ and $\beta = 1$. 

The neural sufficient statistics is defined as 
\begin{align}
T^\textsc{l}_\phi(x_n, \mu_{1:M}) :=  \big( T^\textsc{l}_\phi(x_n, \mu_1), \: T^\textsc{l}_\phi(x_n, \mu_2), \: ..., \: T^\textsc{l}_\phi(x_n, \mu_M) \big)
\end{align}
where each element is parameterized by the network
\begin{table}[!h]
    \centering
    \begin{tabular}{c}
    \toprule
        $T^\textsc{l}_\phi(x_n, \mu_m), \: x_n\in\mathbb{R}^2, \: \mu_m\in\mathbb{R}^2$
        \\
    \midrule
    Concatenate$[x_n, \: \mu_m]$\\
    \hline
    \parbox{4cm}{\centering FC. 32. Tanh. FC. 1.}\\
    \bottomrule
    \end{tabular}
    \label{arch-gmm-local}
\end{table}
  

We compute the parameters of the conditional proposal $\tilde{\pi}_n$ by computing the logits normalizing it using Softmax activation
\begin{align}
    \tilde{\pi}_n
    =
    \text{Softmax}
    \Big(
    \log \pi + T^\textsc{l}_{\phi}(x_n, \, \mu_{1:M})
    \Big).
\end{align}
Then we can sample assignments $c_n$ from the variational distribution $c_n \sim \text{Categorical}(\tilde{\pi}_n)$. The neural proposal for embedding variable $h_n$ is conditioned on the assignments $c_n$ in a way that it takes as input the mean of the cluster $\mu_m$, to which each point belongs to, i.e. 
\begin{align}
    q_\phi(h_n \mid x_n, \mu_{1:M}, c_n) &= q_\phi(h_n \mid x_n, \mu_{c_n}).
\end{align}
As a result, the network for this neural proposal is
\begin{table}[!h]
    \centering
    \begin{tabular}{c}
    \toprule
        $q_\phi(h_n \mid x_n, \mu_{c_n}), \: x_n \in\mathbb{R}^2, \: \mu_{c_n} \in\mathbb{R}^2$
        \\
    \midrule
    $x_n - \mu_{c_n}$\\
    \hline
    \parbox{4cm}{\centering FC. $2\times 32$. Tanh. FC. $2\times 1$.}\\
    \bottomrule
    \end{tabular}
    \label{arch-gmm-local}
\end{table}

Since the Beta distribution requires its parameters to be positive, this network will output the logarithms of proposal parameters for $h_n$ of the form
\begin{align}
    \log \tilde{\alpha}_n, \: \log \tilde{\beta}_n \xleftarrow{} q_\phi(h_n \mid x_n, \mu_{c_n}).
\end{align}
In this experiment we also learn a deep generative model of the form
\begin{align}
    p_\q(x_{1:N} | \mu_{1:M}, c_{1:N}, h_{1:N}) := \prod_{n=1}^N \text{Normal}
    \Big(x_n  \:\Big\vert\:  g_\q(h_n) + \mu_{c_n}, \sigma_\epsilon^2 I \Big).
\end{align}
where $\sigma_\epsilon=0.1$ is a hyper-parameter. The architecture of the MLP decoder $g_\q(h_n)$ is
\begin{table}[!h]
    \centering
    \begin{tabular}{c}
    \toprule
    $g_\q(h_n), \: h_n \in\mathbb{R}^1$ \\
    \midrule
    FC. 32. Tanh. FC. 2. \\
    \bottomrule
    \end{tabular}
    \label{arch-dgmm-decoder}
\end{table}

\newpage
\subsection{Time Series Model -- Bouncing MNIST}
We learn a deep generative model of the form
\begin{align}
    p_\q(x_{1:T} \mid z^{\mathrm{what}}_{1:D}, z^{\mathrm{where}}_{1:T})
    =
    \prod_{t=1}^T
    \text{Bernoulli}
    \Big(
    x_t \:\Big\vert\:
        \sigma
        \Big(
            \sum_{d} \mathrm{ST}
            \big(
                g_\q (\z_{d}^{\mathrm{what}})
                , 
                \:
                z^{\mathrm{where}}_{d, t}
            \big)
        \Big)
    \Big)
\end{align}
Given each digit feature $z^{\mathrm{what}}_{d}$, the APG sampler reconstruct a $28\times28$ MNIST image using a MLP decoder, the architecture of which is

\begin{table}[!h]
\centering
\label{arch-bmnist-decoder}
\begin{tabular}{l}
    \toprule
     $g_\q(z^{\mathrm{what}}_d), \: z^{\mathrm{what}}_d \in\mathbb{R}^{10}$ \\
    \midrule
    FC. 200. ReLU. \\
    \hline
    FC. 400. ReLU.\\
    \hline
    FC. 784. Sigmoid. \\
    \bottomrule
\end{tabular}
\end{table}

Then we put each reconstructed image $g_\q(z^{\mathrm{what}}_d)$ onto a $96\times96$ canvas using a spatial transformer ST which takes position variable $z^{\mathrm{where}}_{d, t}$ as input. To ensure a pixel-wise Bernoulli likelihood, we clip on the composition as
\begin{align}
    \text{For each pixel} \, p_i \in \Big( \sum_{d} \mathrm{ST} \big( g_\q (\z_{d}^{\mathrm{what}}), \:z^{\mathrm{where}}_{d, t} \big) \Big),
    \:
    \sigma(p_i)
    = \begin{cases}
        p_i = 0 & \text{if} \, p_i < 0 \\    
        p_i = p_i & \text{if} \, 0 \leq p_i \leq 1 \\   
        p_i = 1 & \text{if} \, p_i > 1 \\
      \end{cases}
\end{align}

The APG sampler in the bouncing MNIST employs neural proposals of the form
\begin{align}
    q_\f (z^{\mathrm{where}}_{1:D, t} \mid x_{t})
    &=
    \prod_{d=1}^D \text{Normal} \Big(z^{\mathrm{where}}_{d, t}  \:\Big\vert\: \tilde{\mu}_{d, t}^\mathrm{where}, \tilde{\sigma}^{\mathrm{where} \, 2}_{d, t} I \Big),
    \qquad \text{for} \: t = 1, 2, \dots, T,
    \\
    q_\f (z^{\mathrm{where}}_{1:D, t} \mid x_{t}, z^{\mathrm{what}}_{1:D})
    &=
    \prod_{d=1}^D \text{Normal} \Big(z^{\mathrm{where}}_{d, t}  \:\Big\vert\: \tilde{\mu}_{d, t}^\mathrm{where}, \tilde{\sigma}^{\mathrm{where} \, 2}_{d, t} I \Big),
    \qquad \text{for} \: t = 1, 2, \dots, T,
    \\
    q_\f (z^{\mathrm{what}}_{1:D} \mid x_{1:T}, z^{\mathrm{where}}_{1:T})
    &=
    \prod_{d=1}^D \text{Normal} \Big(z^{\mathrm{what}}_{d}  \:\Big\vert\: \tilde{\mu}_d^\mathrm{what}, \tilde{\sigma}^{\mathrm{what} \, 2}_d I \Big)
    .
\end{align}
We train the proposals with instances containing $D=3$ digits and $T=10$ time steps and test them with instances containing up to $D=5$ digits and $T=100$ time steps. We use the tilde symbol $\,\tilde{}\,$ to denote the parameters of the conditional neural proposals (i.e. approximate Gibbs proposals).

The APG sampler uses these proposals to iterate over the $T+1$ blocks 
\begin{align*}
    \{z_{1:D}^{\mathrm{what}}\}, 
    \: 
    \{z_{1:D, 1}^{\mathrm{where}}\}, 
    \: 
    \{z_{1:D, 2}^{\mathrm{where}}\}, 
    \:
    \dots, 
    \: 
    \{z_{1:D, T}^{\mathrm{where}}\}.
\end{align*}

For the position features, the proposal $q_\f (z^{\mathrm{where}}_{1:D, t} \mid x_{t})$ and proposal $q_\f (z^{\mathrm{where}}_{1:D, t} \mid x_{t}, z^{\mathrm{what}}_{1:D})$ share the same network, but contain different pre-steps where we compute the input of that network. The initial proposal $q_\f (z^{\mathrm{where}}_{1:D, t} \mid x_{t})$ will convolve the frame $x_t$ with the mean image of the MNIST dataset; The conditional proposal $q_\f (z^{\mathrm{where}}_{1:D, t} \mid x_{t}, z^{\mathrm{what}}_{1:D})$ will convolve the frame $x_t$ with each reconstructed MNIST image $g_\q(z^\mathrm{what}_d)$. We perform convolution sequentially by looping over all digits $d=1, 2, ..., D$. Here is pseudocode of both pre-steps:

\begin{algorithm}[!h]
    \setstretch{1.2}
  \caption{Convolution Processing for  $q_\f (z^{\mathrm{where}}_{1:D, t} \mid x_{t})$}
\begin{algorithmic}[1]
  \State \textbf{Input} frame $x_t\in\mathbb{R}^{9216}$, mean image of MNIST dataset $mm \in \mathbb{R}^{784}$
  \For {$d = 1$ \textbf{to} $D$}
    \State $x_{d, t}^{\text{conv}} \xleftarrow{} \text{Conv2d}(x_t)$ with kernel $mm$, stride = 1, no padding.
    \EndFor
  \State \textbf{Output} Convolved features $\{x_{d, t}^{\text{conv}} \in\mathbb{R}^{4761}\}_{d=1}^D$
\end{algorithmic}
\end{algorithm}

\begin{algorithm}[!h]
    \setstretch{1.2}
  \caption{Convolution Processing for  $q_\f (z^{\mathrm{where}}_{1:D, t} \mid x_{t}, z^{\mathrm{what}}_{1:D})$}
\begin{algorithmic}[1]
  \State \textbf{Input} frame $x_t\in\mathbb{R}^{9216}$, reconstructed MNIST digits $\{g_\q(z^\mathrm{what}_d) \in \mathbb{R}^{784}\}_{d=1}^D$
  \For {$d = 1$ \textbf{to} $D$}
    \State $x_{d, t}^{\text{conv}} \xleftarrow{} \text{Conv2d}(x_t)$ with kernel $g_\q(z^\mathrm{what}_d)$, stride = 1, no padding.
    \EndFor
  \State \textbf{Output} Convolved features $\{x_{d, t}^{\text{conv}} \in\mathbb{R}^{4761}\}_{d=1}^D$
\end{algorithmic}
\end{algorithm}

\newpage
We employ a MLP encoder $f_\f^\textsc{l}(\cdot)$ that takes the convolved features as input and predict the variational parameters for positions $\{z^\mathrm{where}_{d, t}\}_{d=1}^D$ at step $t$, i.e. vector-valued mean $\tilde{\mu}^\mathrm{where}_{d, t}$ and logarithm of the diagonal covariance $\log \tilde{\sigma}^{\mathrm{where} \, 2}_{d, t}$ as
\begin{align}
    &   
    \tilde{\mu}^\mathrm{where}_{d, t}, \log \tilde{\sigma}^{\mathrm{where} \, 2}_{d, t} \xleftarrow{} f_\f^\textsc{l}(x_{d, t}^{\text{conv}})
    ,
    &
    d = 1, 2, \dots, D.
\end{align}
The architecture of the MLP encoder $f_\f^\textsc{l}(\cdot)$ is
\begin{table}[!h]
    \centering
    \begin{tabular}{l}
     \toprule
    $f_\f^\textsc{l}(x_{d, t}^{\text{conv}}), \: x_{d, t}^{\text{conv}} \in \mathbb{R}^{4761}$ \\
    \midrule
    FC. 200. ReLU. \\
    \hline
    FC. $2\times100$. ReLU. \\
    \hline
    FC. $2\times2$. \\
    \bottomrule
    \end{tabular}
    \label{arch-bmnist-enc-where}
\end{table}

For the digit features, the APG sampler performs conditional updates in the sense that we crop each frame $x_t$ into a $28\times28$ subframe according to $z^\mathrm{where}_{d, t}$ using the spatial transformer ST as
\begin{align}
    &x^\text{crop}_{d, t} \xleftarrow{} \text{ST}\big(x_t, z^{\mathrm{where}}_{d, t}\big), &d = 1, 2, \dots, D, \quad t = 1, 2, \dots, T
    .
\end{align}
we employ a MLP encoder $T_\f^\textsc{g}(\cdot)$ that takes the cropped subframes as input, and predicts frame-wise neural sufficient statistics, which we will sum up over all the time steps. The architecture of this network is
\begin{table}[!h]
\centering
\begin{tabular}{c}
    \toprule
    $T_\f^\textsc{g}(x^\text{crop}_{d, t}), \: x^\text{crop}_{d, t}\in\mathbb{R}^{784}$ \\
    \midrule
    FC. 400. ReLU. \\
    \hline
    FC. 200. ReLU. \\
    \bottomrule
\end{tabular}
\end{table}

Then we employ another network $f_\f^\textsc{g}(\cdot)$ that takes the sums as input, and predict the variational parameters for digit features $\{z^\mathrm{what}_{d}\}_{d=1}^D$, i.e. the vector-valued means $\{\tilde{\mu}_d^\mathrm{what}\}_{d=1}^D$ and the logarithms of the diagonal covariances $\{\log \tilde{\sigma}_d^{\mathrm{what} \, 2}\}_{d=1}^D$.  The architecture of this network is
\begin{table}[!h]
    \centering
    \begin{tabular}{c}
    \toprule
    $f_\f^\textsc{g} (\sum_{t=1}^T T_\f^\textsc{g}(x^\text{crop}_{d, t})), \: \sum_{t=1}^T T_\f^\textsc{g}(x^\text{crop}_{d, t}) \in \mathbb{R}^{200}$ \\
    \midrule
    FC. $2\times 10$ \\
    \bottomrule
    \end{tabular}
\end{table}

\newpage
\section{Analytical inclusive KL divergence during Training in GMM}
\label{appendix-kl-gmm-training}
In GMM experiment, we train the model with different number of sweeps $K$ under fixed computational budget $K \cdot L = 100$. Figure~\ref{fig:kl-training-gmm} shows that more number of sweeps results in slightly faster convergence.
\begin{figure}[!h]
    \centering
    \includegraphics[width=170mm]{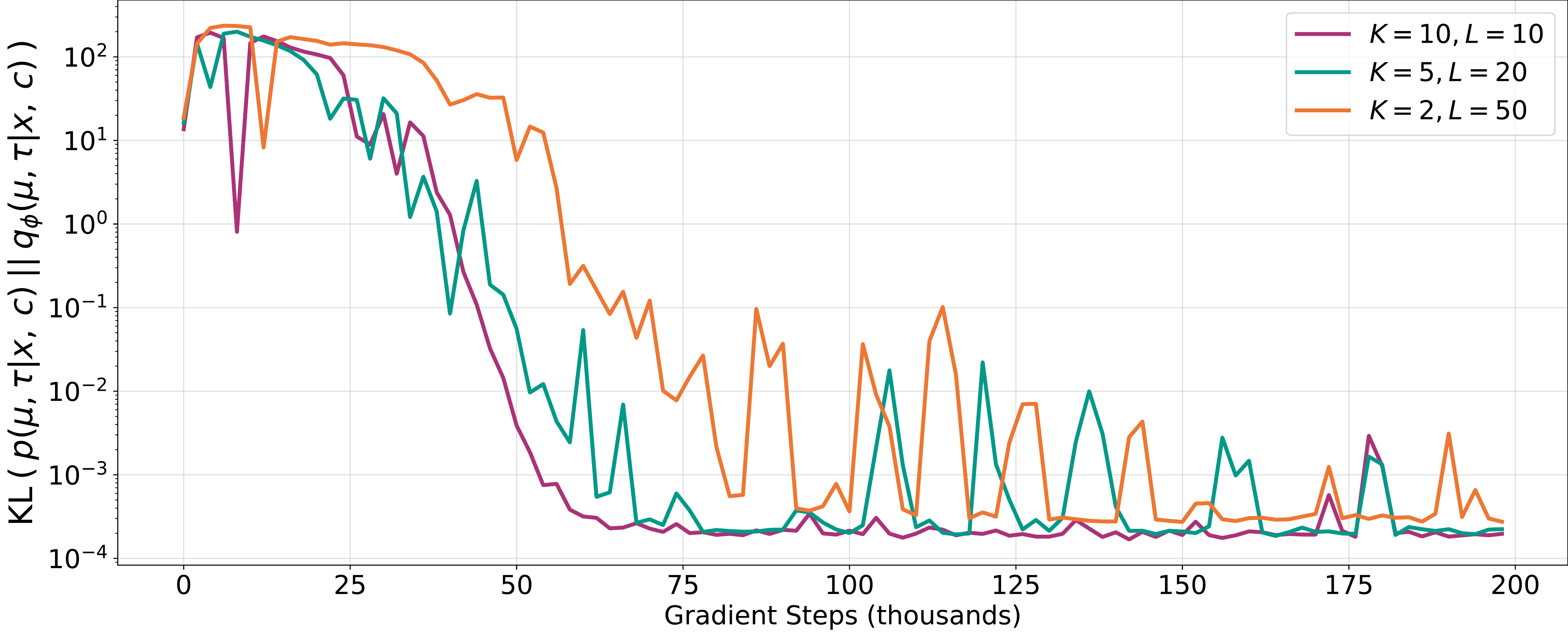}
    \caption{Inclusive KL divergence as a function of gradient steps. Each model is trained with 20000 gradient steps, $K\cdot L=100$}
    \label{fig:kl-training-gmm}
\end{figure}

\section{Comparison between APG sampler and RWS method in Bouncing MNIST}
\label{appendix:bmnist-comparison-rws}
We visualize the inference results and reconstruction from the APG sampler and reweighted wake-sleep method. We can see that APG sampler significantly improves both tracking inference results and the reconstruction on instances with $T=20$ time steps and $D=5$ digits. We can see that the APG sampler achieves substantial results while the RWS method does not make reasonable prediction at all.

\begin{figure*}[!h]
\centering
\includegraphics[width=150mm]{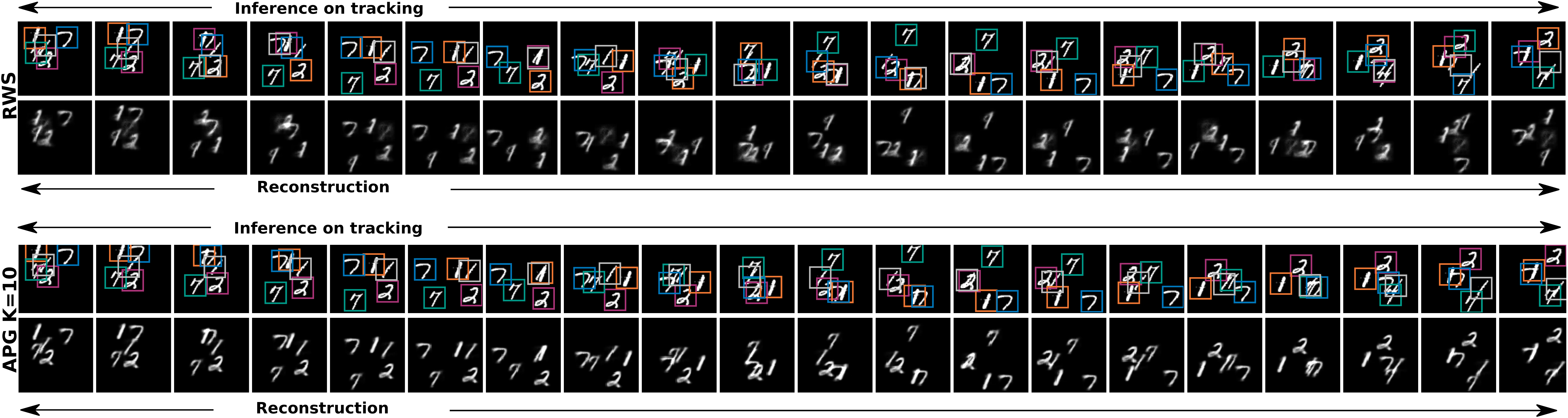}
\caption{Example 1.}
\end{figure*}

\begin{figure*}[!h]
\centering
\includegraphics[width=150mm]{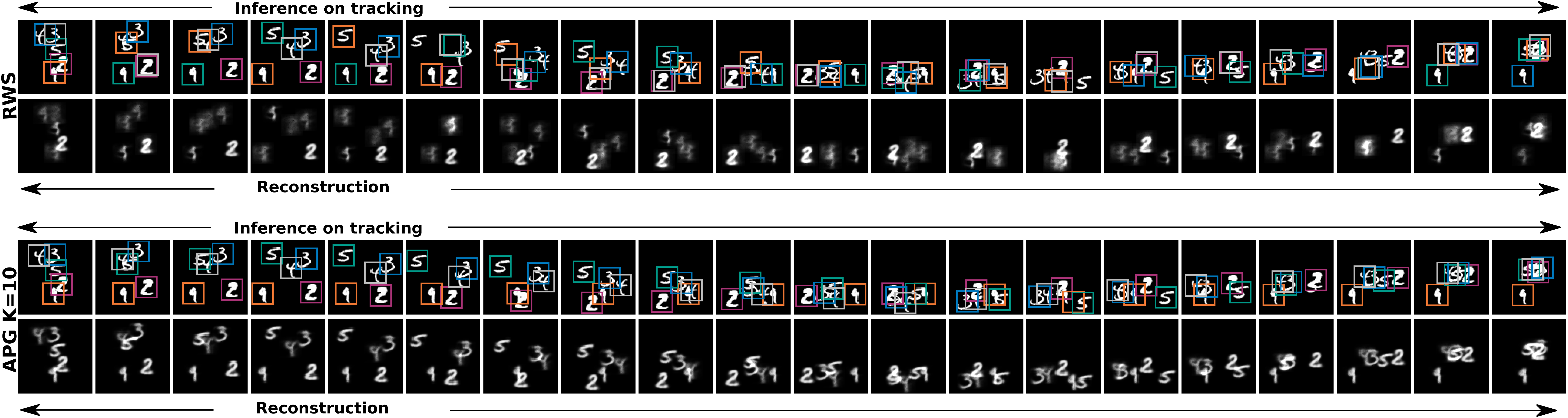}
\caption{Example 2.}
\end{figure*}

\begin{figure*}[!h]
\centering
\includegraphics[width=150mm]{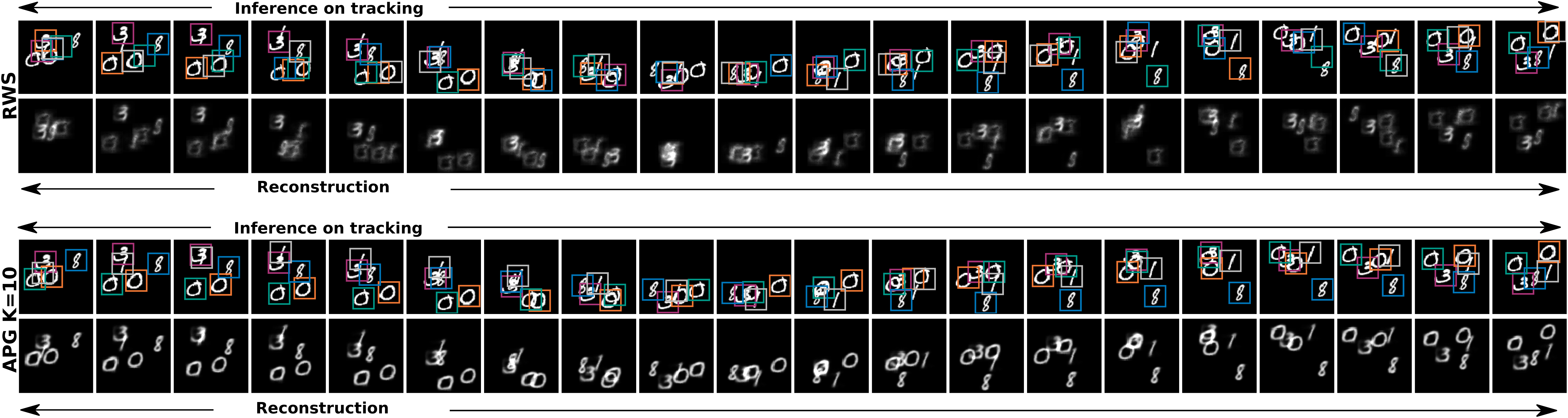}
\caption{Example 3.}
\end{figure*}

\section{Inference Results and Reconstruction on large time steps Bouncing MNIST}
\label{appendix:full-recons}
We show the inference results on tracking and the reconstruction on test instances with $T=100$ time steps and $D=3, 4, 5$ MNIST digits, using models that is trained with instances containing only $T=10$ time steps and $D=3$ digits. In each figure below, the 1st, 3rd, 5th, 7th, 9th rows show the inference results, while the other rows show the reconstruction of the series above. We can see the APG sampler is scalable with much large number of latent variables, achieving accurate inference and making impressive reconstruction.
\begin{figure*}[!h]
\includegraphics[width=170mm]{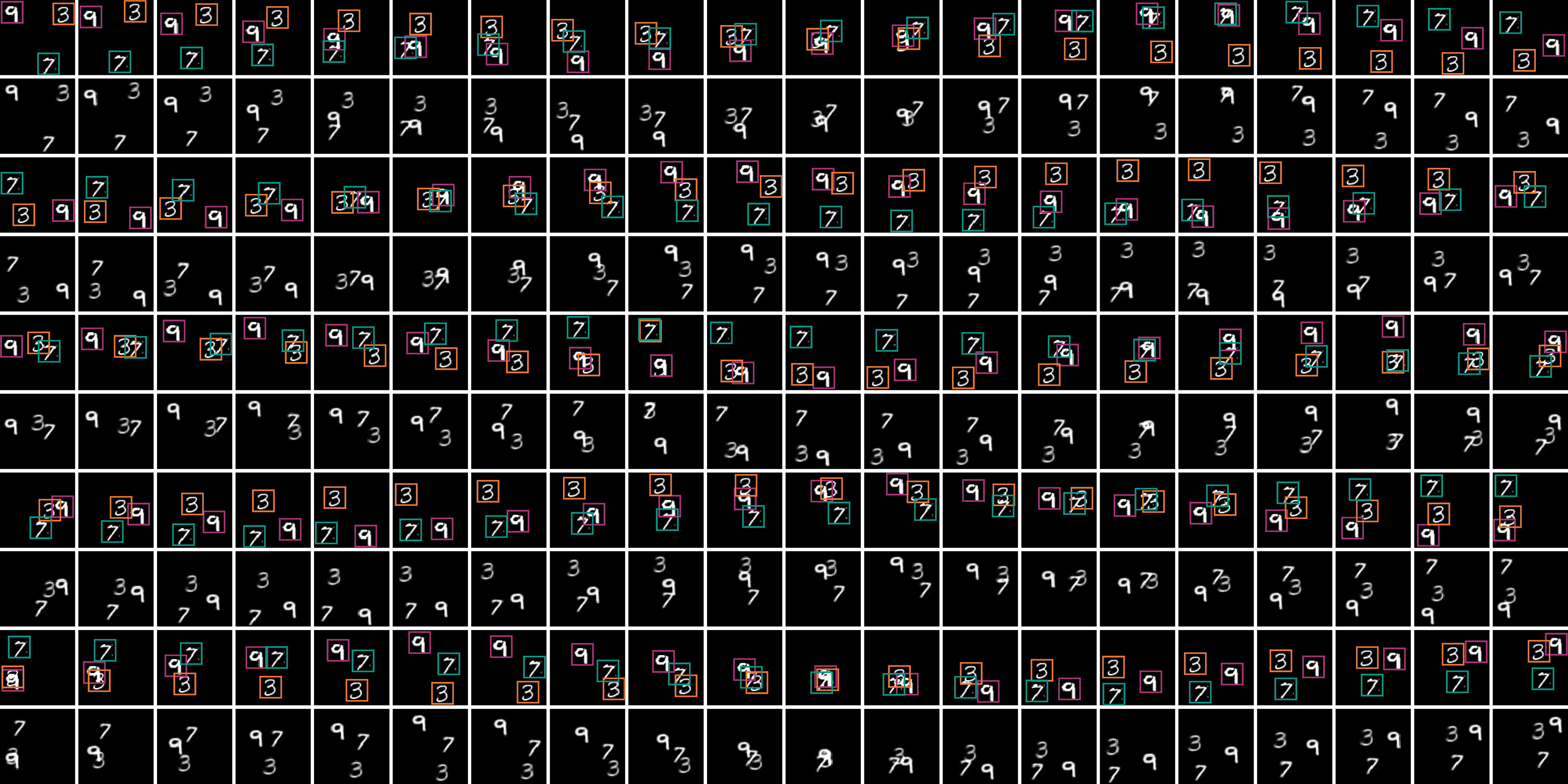}
\caption{Full reconstruction for a video where $T=100, D=3$.}
\end{figure*}

\begin{figure*}[!h]
\includegraphics[width=170mm]{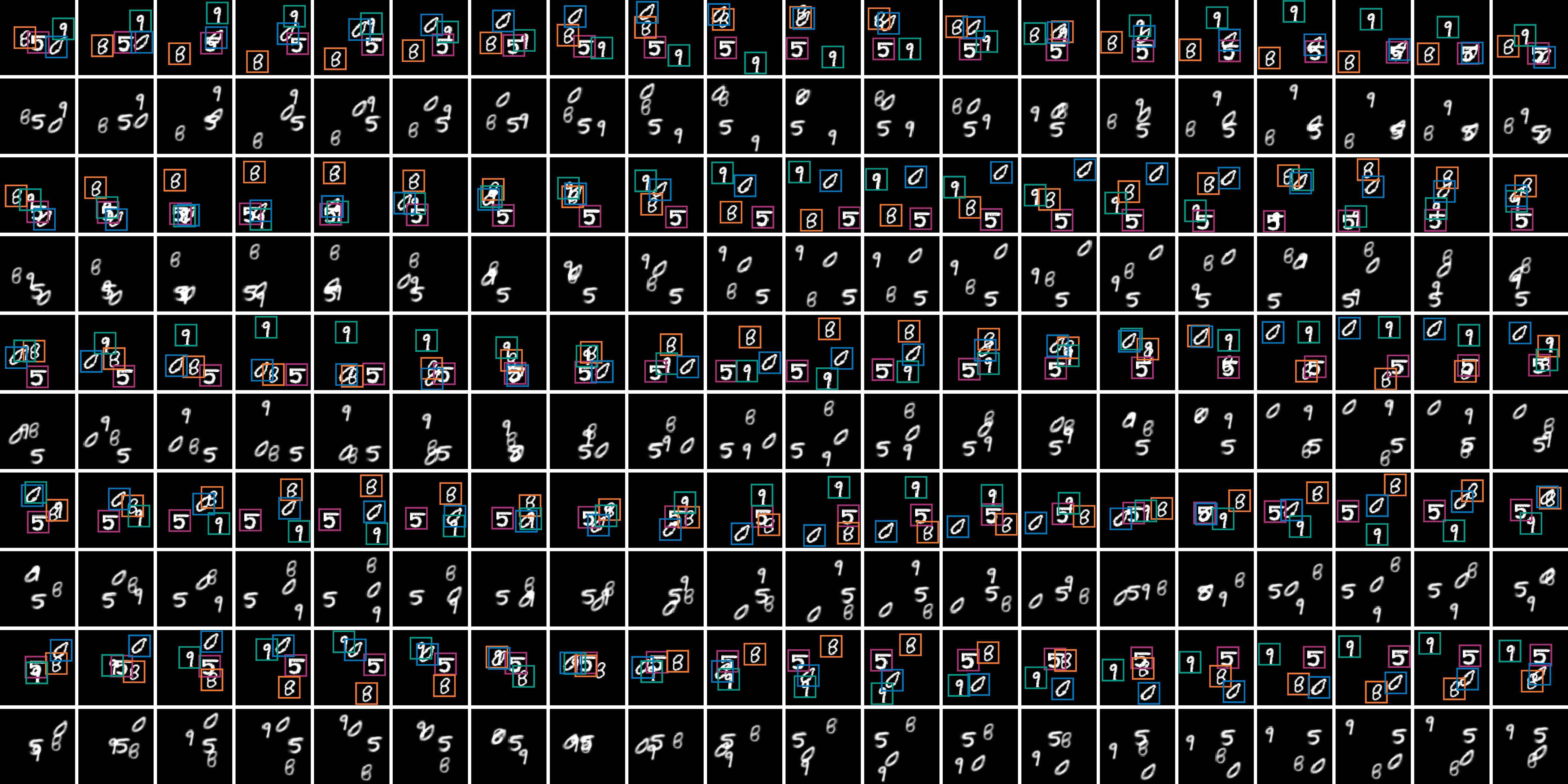}
\caption{Full reconstruction for a video where $T=100, D=4$.}
\end{figure*}
\newpage
\begin{figure*}[!h]
\includegraphics[width=170mm]{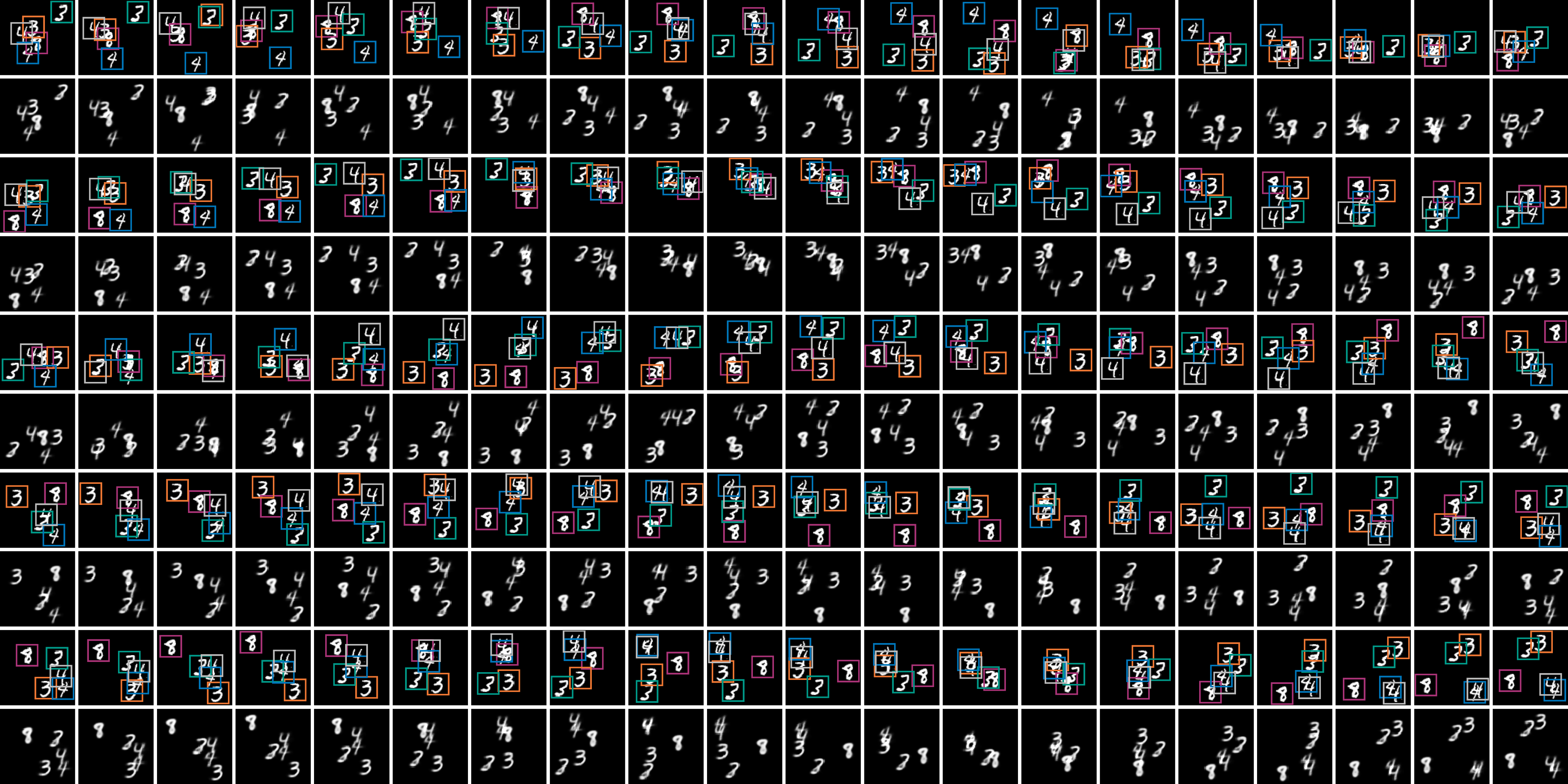}
\caption{Full reconstruction for a video where $T=100, D=5$.}
\end{figure*}

\begin{figure*}[!h]
\includegraphics[width=170mm]{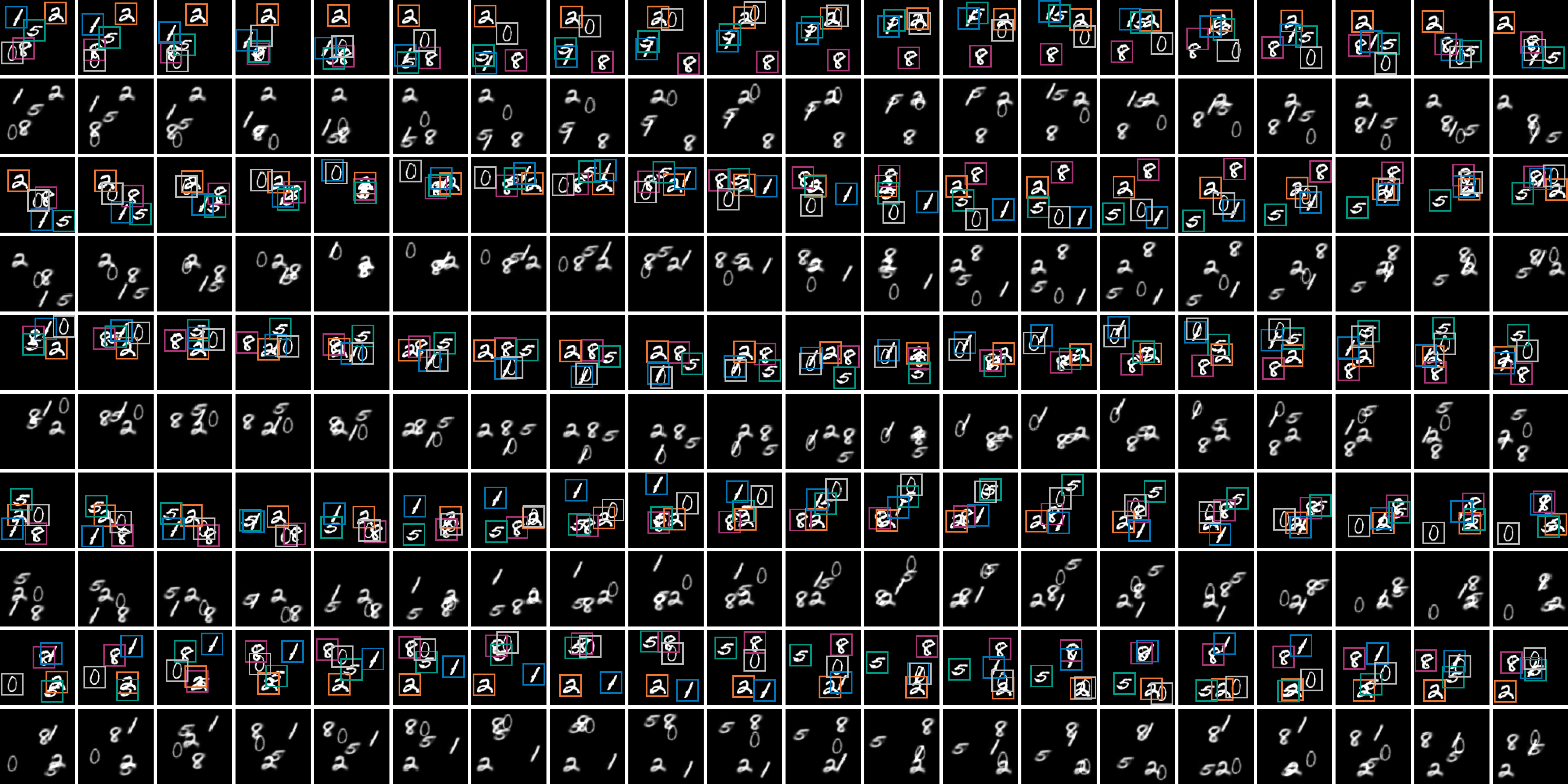}
\caption{Full reconstruction for a video where $T=100, D=5$.}
\end{figure*}

\end{document}